\let\OLDthebibliography\thebibliography
\renewcommand\thebibliography[1]{
  \OLDthebibliography{#1}
  \setlength{\parskip}{0pt}
  \setlength{\itemsep}{2.5pt plus 0.3ex}
}
\title{
SGD on Neural Networks Learns\\
Functions of Increasing Complexity
}
\author{%
Preetum Nakkiran\\
Harvard University
  \And
Gal Kaplun\\
Harvard University
  \And
Dimitris Kalimeris\\
Harvard University
  \And
Tristan Yang\\
Harvard University
  \And
Benjamin L. Edelman\\
Harvard University
  \And
Fred Zhang\\
Harvard University
  \And
Boaz Barak\\
Harvard University%
}
\def\blfootnote{\xdef\@thefnmark{}\@footnotetext}
\begin{document}

\maketitle

\begin{abstract}
    We perform an experimental study of the dynamics of Stochastic Gradient Descent (SGD) in learning deep neural networks for several real and synthetic classification tasks.
We show that in the initial epochs, almost all of the performance improvement of the classifier obtained by SGD can be explained by a linear classifier.
More generally, we give evidence for the hypothesis that, as iterations progress,  SGD learns functions of increasing complexity. This hypothesis can be helpful in explaining why SGD-learned classifiers tend to generalize well even in the over-parameterized regime.
We also show that the linear classifier learned in the initial stages is ``retained'' throughout the execution even if training is continued to the point of zero training error, and complement this with a theoretical result in a simplified model.
Key to our work is a new measure of
how well one classifier explains the performance of another, based on conditional mutual information.

\end{abstract}

\blfootnote{\texttt{preetum@cs.harvard.edu,
galkaplun@g.harvard.edu,
kalimeris@g.harvard.edu,
tristanyang@college.harvard.edu,
bedelman@g.harvard.edu,
hzhang@g.harvard.edu,
b@boazbarak.org
}}

\section{Introduction}
\label{sec:intro}

Neural networks have been extremely successful in modern machine learning, achieving the state-of-the-art in a wide range of domains, including image-recognition, speech-recognition, and game-playing \cite{GMH13,HeZRS15,Krizhevsky:2017, Silver2016}.
Practitioners often train deep neural networks with hundreds of layers and millions of parameters and manage to find networks with good out-of-sample performance.  However, this practical prowess is accompanied by feeble theoretical understanding.
In particular, we are far from understanding the \emph{generalization performance} of neural networks---why can we train large, complex models on relatively few training examples and still expect them to generalize to unseen examples?
It has been observed in the literature that the classical generalization bounds that guarantee small generalization gap (i.e., the gap between train and test error) in terms of VC dimension or Rademacher complexity do not yield meaningful guarantees  in the context of real neural networks. More concretely,
for most if not all real-world settings, there exist neural networks which fit the train set exactly, but have arbitrarily bad test error~\cite{ZBHRV17}.

The existence of such ``bad'' empirical risk minimizers (ERMs) with large gaps between the train and test error means  that the generalization performance of deep neural networks depends on the particular algorithm (and initialization) used in training, which is most often stochastic gradient descent (SGD).  It has been conjectured that SGD provides some form of ``implicit regularization'' by outputting ``low complexity'' models, but it is safe to say that the precise notion of complexity and the mechanism by which this happens are not yet understood (see related works below).

\newpage

In this paper, we provide evidence for this hypothesis and shed some light on how it comes about.
Specifically, our thesis is that the \textit{dynamics} of SGD play a crucial role and
that SGD finds generalizing ERMs because:

\begin{enumerate}[(i)]
    \item In the initial epochs of learning, SGD has a bias towards \textit{simple classifiers} as opposed to complex ones; and

    \item in later epochs, SGD is \textit{relatively stable} and retains the information from the simple classifier it obtained in the initial epochs.
\end{enumerate}

\begin{figure}
    \centering
    \begin{subfigure}{0.45\textwidth}
            \centering
            \includegraphics[width=\textwidth]{./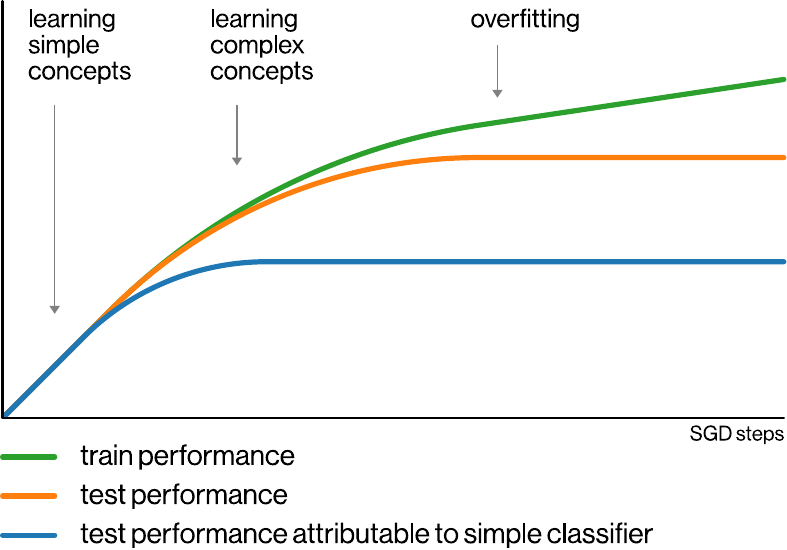}
    \end{subfigure}%
    \hspace{0.1\textwidth}
    \begin{subfigure}{0.3\textwidth}
            \centering
            \includegraphics[width=\textwidth]{./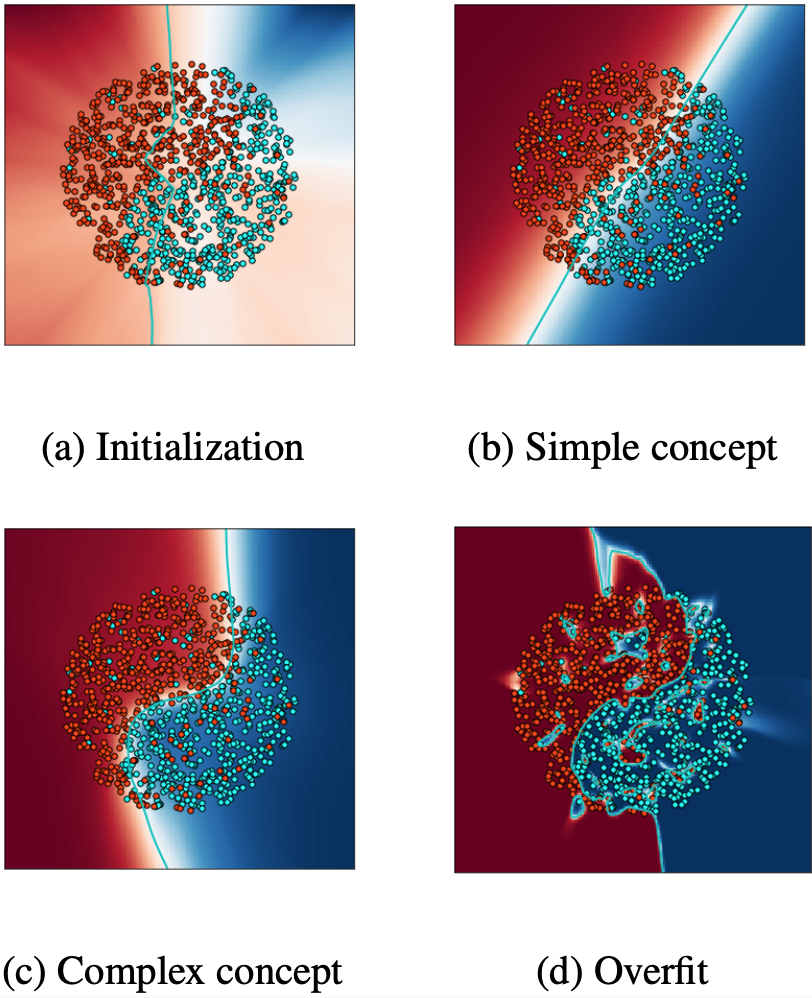}
    \end{subfigure}%
\caption{\emph{Left:} An illustration of our hypothesis of how SGD dynamics progress. Initially, all progress in learning can be attributed to a ``simple'' classifier (in some precise sense to be later defined), then SGD continues in learning more complex but still meaningful classifiers. Finally, the classifier will interpolate the training data, while retaining correlation with simpler classifiers that allows it to generalize. \emph{Right:} A plot of how the decision boundary evolves as a neural network is trained for a simple classification task. The data distribution is uniform in a 2-dimensional ball of radius 1, labeled by a sinusoidal curve with 10\% label noise. It is evident that an almost linear decision boundary emerges in the first phases of training before more complex classifiers are learned. In the last stages, the network overfits to the label noise, while still retaining the concept.}
    \label{fig:phasesoflearning}
    \end{figure}

Figure~\ref{fig:phasesoflearning} illustrates qualitatively the predictions of this thesis for the dynamics of SGD over time.
In this work, we give experimental and theoretical evidence for both parts of this thesis.
While several quantitative measures of complexity of neural networks have been proposed in the past, including the classic notions of VC dimension, Rademacher  complexity and margin~\cite{anthony2009neural, bartlett2002rademacher, Hinton:1993:KNN:168304.168306, NeyshaburTS15,KeskarMNST16, BartlettFT17}, we do not propose such a measure here. Our focus is on the \textit{qualitative} question of how much of SGD's early progress in learning can be explained by simple models.
Our main findings are the following:

\begin{claim}[Informal] \label{clm:informalinit}
In natural settings,
the initial performance gains of
SGD on a randomly initialized neural network can be attributed almost entirely to its
learning a function correlated with a \emph{linear classifier} of the data.
\end{claim}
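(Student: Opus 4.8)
Since Claim~\ref{clm:informalinit} is an empirical statement about the dynamics of SGD in ``natural settings,'' the plan is to give it a falsifiable operational form and then verify that form across a representative collection of classification tasks. The relevant random variables are the label $Y$, the hard (top-class) prediction $f_t(X)$ of a network after $t$ epochs of SGD from a random initialization, and the hard prediction $\ell(X)$ of a reference linear classifier (for instance, multiclass logistic regression) trained on the same data. These are all discrete, so their joint law can be estimated on a held-out set by counting, and every quantity below is then a plug-in estimate. In terms of our conditional-mutual-information measure, the object of interest is the \emph{residual} $I\!\left(Y; f_t(X)\mid \ell(X)\right)$ --- the information $f_t$'s prediction carries about the label \emph{beyond} what $\ell$'s prediction already carries --- together with the total $I(Y; f_t(X))$ and the test accuracy of $f_t$. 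The ``explained'' part is $I(Y;f_t(X)) - I(Y;f_t(X)\mid\ell(X))$, and a convenient scalar summary is the fraction $\mu_t = 1 - I(Y;f_t(X)\mid\ell(X))\,/\,I(Y;f_t(X))$. Claim~\ref{clm:informalinit} then becomes: during the initial phase of training (once $f_t$ has left its random initialization but before SGD has begun to fit genuinely non-linear structure), $I(Y;f_t(X)\mid\ell(X))\approx 0$, i.e.\ $\mu_t\approx 1$; note that this is exactly ``$f_t$ has learned a function \emph{correlated with} a linear classifier,'' and does not require $f_t$ and $\ell$ to agree pointwise.

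The experiments proceed as follows. Fix $\ell$ by training a linear model on the training set; then train standard architectures with SGD --- small CNNs and ResNets on MNIST and CIFAR-10, and a fully-connected net on synthetic distributions whose Bayes-optimal boundary is a known ``linear term plus higher-order correction'' (so the measure can be sanity-checked against ground truth) --- checkpointing densely over the first few epochs. At each checkpoint, evaluate on a disjoint held-out set and record the three quantities above. The predicted picture is characteristic: $I(Y;f_t(X))$ and the accuracy rise quickly while $I(Y;f_t(X)\mid\ell(X))$ stays near zero, so that $I(Y;f_t(X)) - I(Y;f_t(X)\mid\ell(X))$ tracks $I(Y;f_t(X))$ almost exactly, before the residual eventually lifts off as more complex structure is learned. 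As a control that the operative level of complexity is really \emph{linear} and not just ``simple,'' we rerun the analysis with $\ell$ replaced by strictly more expressive but still low-complexity references (a quadratic classifier, a shallow decision tree, a depth-two network) and confirm that the early residual is already near zero for the plain linear reference.

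\textbf{Main obstacle.} The crux is estimating mutual information from finitely many held-out samples. Plug-in estimators of (conditional) mutual information are biased \emph{upward}, which is the dangerous direction here: it inflates the residual $I(Y;f_t(X)\mid\ell(X))$ and could manufacture the appearance of non-linear learning where there is none. Restricting attention to hard predictions keeps the joint alphabet small (of size at most $k^{3}$ for $k$ classes), so this bias is controllable, but we must still (i) use a genuinely disjoint held-out set for the estimates while $\ell$ and $f_t$ are fit on the training set, (ii) apply a bias correction --- e.g.\ Miller--Madow, or subtracting the estimate obtained after a random permutation of the labels --- and (iii) report error bars over resampled held-out sets and random seeds. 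A secondary, softer obstacle is robustness: the conclusion must not hinge on the particular linear classifier, architecture, learning rate, batch size, or dataset, so each of these has to be varied; and the qualifier ``natural settings'' has to be taken at face value, since one can build synthetic distributions (e.g.\ a pure parity) on which the claim provably fails --- delimiting exactly where the phenomenon does and does not hold.
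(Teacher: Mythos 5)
Your proposal matches the paper's approach essentially exactly: the paper formalizes the claim via the same conditional-mutual-information residual (its \emph{performance correlation} $\mu_Y(F_t;L)=I(F_t;Y)-I(F_t;Y\mid L)$), estimates it by plug-in counts on a held-out set over the small discrete alphabet of hard binary predictions, and verifies $\mu_Y(F_t;L)\approx I(F_t;Y)$ during early training on binary MNIST, two binary CIFAR-10 tasks, and a synthetic linear-plus-sinusoid distribution, using standard CNN/MLP architectures, vanilla SGD, and repeated random seeds. The only substantive difference is the control: where you compare against strictly richer references (quadratic, tree, depth-two net), the paper instead compares against a \emph{null model} --- a random classifier with the same accuracy as $\ell$ --- to show the early residual is small specifically for a linear classifier and not for an arbitrary function of matching accuracy.
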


\begin{claim}[Informal] \label{clm:informalforget}
In natural settings, once SGD finds a simple classifier with good generalization, it is likely to retain it, in the sense that it will perform well on the fraction of the population classified by the simple classifier, even if training continues until it fits all training samples.
\end{claim}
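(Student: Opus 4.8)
The plan is to support Claim~\ref{clm:informalforget} on two fronts: (a)~an experimental protocol that directly measures ``retention'' in the natural settings of interest, and (b)~a clean stylized model in which the same phenomenon can be proved rigorously, thereby isolating the mechanism. For the experimental side, the idea is to fix the (approximately linear) classifier $f_{\mathrm{lin}}$ extracted from the early epochs --- identified via the conditional-mutual-information measure introduced earlier --- and then continue SGD all the way to zero training error, obtaining the interpolating network $f_\infty$. One then partitions the test population into the subpopulation $S$ on which $f_{\mathrm{lin}}$ is correct and its complement, and tracks $\Pr_{(x,y)\sim S}[f_\infty(x)=y]$ as training proceeds past the interpolation threshold. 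Claim~\ref{clm:informalforget} predicts this quantity stays close to $1$ even while the overall test error degrades because of memorization of label noise; the same quantity on the complement of $S$ should not improve comparably.

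For the theoretical side, I would formalize the claim in an overparametrized linear-in-features model mimicking the NTK decomposition of a wide network. Let the representation be $\Phi(x) = (x,\ \psi(x))$, where $x\in\mathbb{R}^d$ are ``simple'' coordinates and $\psi(x)\in\mathbb{R}^N$ with $N\gg m$ are ``memorization'' coordinates chosen so that $\langle \psi(x_i),\psi(x_j)\rangle \approx 0$ for distinct training points and $\mathbb{E}_x \psi(x)\approx 0$ (an RBF-type feature map makes both hold). Take labels $y = \langle w^\star, x\rangle + \xi$ with $w^\star$ of small norm (the ``simple classifier'') and $\xi$ the noise/complex residual, and run full-batch gradient descent on the square loss from the origin. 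Step~1: show the iterates in the $x$-block track those of plain linear regression on $\{(x_i,y_i)\}$, and hence rapidly reach $\widehat{w}\approx w^\star$ --- the model-specific analogue of Claim~\ref{clm:informalinit}. Step~2: at that point decompose the residual and show that driving the training loss to zero is accomplished almost entirely in the $\psi$-block, using near-orthogonality of $\{\psi(x_i)\}$ and the orthogonality of the $\psi$- and $x$-blocks on the data; the $x$-block updates after the early phase are controlled by the cross-correlations $\max_i\|X^\top\psi(x_i)\|$, which are small. Step~3: conclude $\|\widehat{w}_\infty - w^\star\| = o(1)$, so on a fresh clean point $\langle \widehat{w}_\infty, x\rangle + \langle \widehat{v}_\infty, \psi(x)\rangle \approx \langle w^\star, x\rangle$ since $\psi(x)$ is, in expectation, near-orthogonal to $\mathrm{span}\{\psi(x_i)\}$; hence the error on the clean subpopulation is preserved, which is exactly retention.

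The main obstacle is Step~2: controlling the \emph{coupling} between the two blocks over the entire (possibly very long) late phase of training, not just for a single step. One must show the accumulated leakage from memorizing the $m$ noisy points into the $x$-block stays $o(1)$, which requires a quantitative incoherence bound (how small $\max_{i\neq j}|\langle\psi(x_i),\psi(x_j)\rangle|$ and $\max_i\|X^\top\psi(x_i)\|$ must be as functions of $m,d,N$) together with a contraction/stability argument for the GD dynamics restricted to each block; this is most delicate precisely when $\|\xi\|$ is a constant fraction of the signal. Two further gaps I would flag rather than fully close: SGD's sampling noise and example ordering (I would first prove the statement for full-batch GD and then argue the SGD perturbation is benign under standard step-size assumptions), and the inherent distance between this stylized model and ``natural settings'' --- which is why the experiments in part~(a), rather than the theorem, are the primary evidence for Claim~\ref{clm:informalforget}, with the theorem serving to exhibit one concrete mechanism by which retention can occur.
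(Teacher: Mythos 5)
Your two-pronged strategy (experiments in natural settings plus a stylized provable model) is exactly the structure the paper uses: the experimental evidence is the plateau of the conditional-mutual-information quantity $\mu_Y(F_t;L)$ in Sections~3--4, and the provable mechanism is Theorem~\ref{thm:sgd-good}. Your conceptual model --- a ``simple'' block plus a near-orthogonal ``memorization'' block, signal carried by the former, interpolation achieved by the latter --- is also the same decomposition the paper uses.

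Where you diverge is in how the theorem is actually proved, and this difference matters because it is precisely what lets the paper close the gap you correctly flag as the main obstacle. Rather than a general feature map $\psi$ with approximate incoherence and a dynamical block-coupling argument, the paper instantiates the model so concretely that your Step~2 disappears: each sample is $\bbx = \eta y\,\bm{e}_1 + \bm{e}_k$ with $k$ uniform over $\{2,\dots,d\}$ and only $n = o(\sqrt d)$ samples, so (whp) the memorization coordinates of distinct training points are \emph{exactly} orthogonal, and exactly orthogonal to the signal coordinate. Because the iterates of (S)GD on the square loss stay in the affine space $\bw_0 + \mathrm{RowSpan}(X)$, the paper does not track the dynamics at all; it writes the limit in closed form, $\bw' = X^T(XX^T)^{-1}(\bm{1} - X\bw_0) + \bw_0$, and uses the rank-one structure $XX^T = I + \bs\bs^T$ with Sherman--Morrison to read off every coordinate of $\bw'$ explicitly. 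The cross-block leakage you worry about in Step~2 is thus controlled by a single algebraic identity, not by an accumulated-perturbation bound over long horizons. The paper also handles the SGD-vs-GD issue exactly as you suggest (strong convexity on the relevant affine subspace makes them converge to the same point). One further contrast: the paper's theorem is deliberately narrower than your three-step plan --- it assumes a bounded initialization (e.g.\ $\bw_0 = \bm{e}_1$) and proves retention of that initialization, whereas your Step~1 also tries to establish the ``linear learning first'' phase inside the same model; the paper leaves that to the experiments and keeps the theorem purely about Claim~\ref{clm:informalforget}. So your route is more ambitious and more general, but the hard incoherence/stability lemma you identify is left open; the paper's route sacrifices generality to obtain a self-contained, closed-form proof.
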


We state these claims broadly, using ``in natural settings'' to refer to settings of network architecture, initialization, and data distributions
that are used in practice.
We emphasize that this holds for \emph{vanilla} SGD with standard architecture and random initialization, without using any regularization, dropout, early stopping or other explicit methods of biasing towards simplicity.

Some indications for variants of Claim~\ref{clm:informalforget} have been observed in practice,  but we provide further experimental evidence and also show (Theorem~\ref{thm:sgd-good}) a simple setting where it \textit{provably} holds.
Our main novelty is Claim~\ref{clm:informalinit}, which is established via several experiments described in Sections~\ref{sec:linear} and~\ref{sec:complex}. We emphasize that our claims \emph{do not} imply that during the early stages of training the decision boundary of the neural network is linear, but rather that there often exists a linear classifier highly agreeing with the network's predictions. The decision boundary itself may be very complex.\footnote{Figure~\ref{fig:gaussian1dim} in Appendix~\ref{app:plots} provides a simple illustration of this phenomenon.}

The other core contribution of this paper is a novel formulation of a \emph{mutual-information based measure} to quantify how much of the prediction success of the neural network produced by SGD can be attributed to a simple classifier. We believe this measure is of independent interest.

\begin{figure}
\floatbox[{\capbeside\thisfloatsetup{capbesideposition={right,top},capbesidewidth=6.1
cm}}]{figure}[\FBwidth]
{\caption{\emph{Beyond linear classifiers.} The two phases of SGD learning in Figure~\ref{fig:phasesoflearning} can be broken into several sub-phases. Phase $i$ involves learning classifiers of lower ``complexity'' than phase $i+1$. The precise notion of complexity may be algorithm, initialization and architecture-dependent. In practice, we expect that the phases will not be completely disjoint and some learning of classifiers of differing complexity will co-occur at the same time.}\label{fig:manyphasecartoon}}
{  \hspace{-2.5em}  \includegraphics[width=.45\textwidth]{./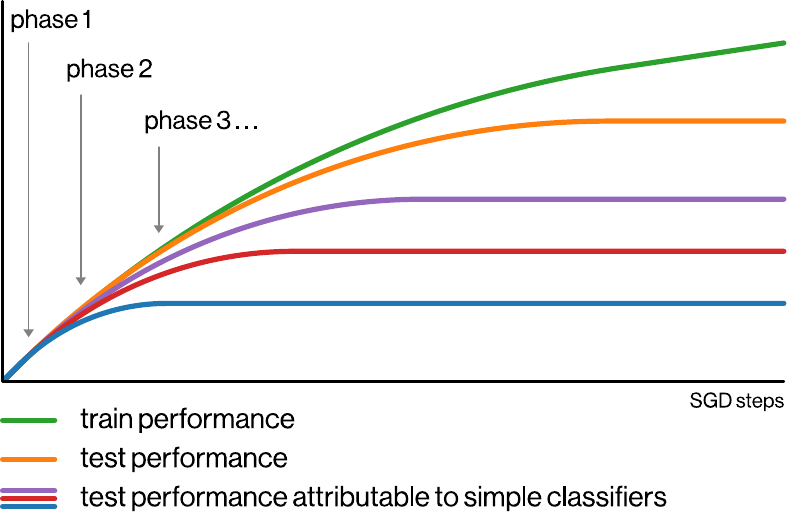}
}

\end{figure}

\begin{remark}[Beyond linear classifiers]\label{rem:beyond_linear}
While our main findings relate to linear classifiers,  our methodology extends beyond this. We conjecture that generally,  the dynamics of SGD are such that it initially learns simpler components of its final classifier, and retains these as it continues to learn more and more complex parts (see  Figure~\ref{fig:manyphasecartoon}).
We provide evidence for this conjecture in Section~\ref{sec:complex}.
\end{remark}

\begin{remark}[Beyond binary classification] This paper is focused on binary classification tasks but our mutual-information based definitions and methodology can be extended to multi-class classification.
Preliminary results suggest that our results continue to hold.
\end{remark}

\paragraph{Related Work.} There is a substantial body of work that attempts to understand the generalization of (deep) neural networks, tackling the problem from different perspectives.  Previous works by Hardt~\etal~(2016) and Kuzborskij \& Lampert~(2017)~\cite{HardtRS15,KuzborskijL17} argue that generalization is due to stability.  Neyshabur~\etal~(2015); Keskar~\etal~(2016); Bartlett~\etal~(2016) consider margin-based approaches
\cite{NeyshaburTS15,KeskarMNST16, BartlettFT17}, while Dziugaite \& Roy (2017); Neyshabur \etal~(2017); Neyshabur \etal~(2018);  Golowich~\etal~(2018); P\'erez \etal~(2019); Zhou \etal~(2019) focus on PAC-Bayes analysis and norm-based bounds \cite{DBLP:conf/uai/DziugaiteR17, NBS, neyshabur2017exploring,  golowich18a, perez2018deep, zhou19}.
Arora~\etal~(2018)~\cite{ pmlr-v80-arora18b} propose a compression-based approach.

The implicit bias of (stochastic) gradient descent
was also studied in various contexts, including linear classification, matrix factorization and neural networks. This includes the works of Brutzkus~\etal~(2017); Gunasekar~\etal~(2017); Soudry~\etal ~(2018);  Gunasekar~\etal~(2018); Li~\etal~(2018); Wu~\etal~(2019) and Ji \& Telgarsky~(2019)~
\cite{BGMS17,gunasekar2017implicit, soudry2018, Gunasekar18,li2018algorithmic,Wu,Telgarsky}.
There are also recent works proving generalization of overparameterized networks,
by analyzing the specific behavior of SGD from random initialization
\cite{AllenZhu18, BGMS17, li2018learning}.
These results are so far restricted to simplified settings.

Several prior works propose measures of the complexity of neural networks,
and claim that training involves learning simple patterns \cite{DM17, rahaman2018spectral, novak2018sensitivity}.
However, a key difference in our work is that
our measures are intrinsic to the
classification function and data-distribution
(and do not depend on the representation of the classifier, or its behavior outside the data distribution).
Moreover, our measures address the extent by which one classifier ``explains'' the performance of another.

The concept of mutual information has also been used in the study of neural networks, though in different ways than ours.
For example, Schwartz-Ziv and Tishby (2017) \cite{SZ17} use it to  argue that a network compresses information as a means of noise reduction, saving only the most meaningful representation of the input.

\paragraph{Paper Organization.}
We begin by defining our mutual-information based formalization of Claims 1 and 2 in Section~\ref{sec:model}. In
Section \ref{sec:linear}, we establish the main result of the paper---that for many synthetic and real data sets,
the performance of neural networks in the early phase of training is well explained by a linear classifier.
In Section \ref{sec:complex}, we investigate extensions to non-linear classifiers (see also Remark~\ref{rem:beyond_linear}). We make the case that as training proceeds,
SGD moves beyond this ``linear learning'' regime, and learns concepts of increasing complexity. In Section~\ref{sec:theory} we focus on the overfitting regime. We provide a simple theoretical setting where, provably, if we start from a ``simple'' generalizable solution, then overfitting to the train set will not hurt generalization. Moreover, the overfit classifier retains the information from the initial  classifier.
Finally, in Section~\ref{sec:conclusion} we discuss future directions.

\newpage

\section{\emph{Performance Correlation} via Mutual Information}
\label{sec:model}

In this section, we present our measures for the contribution of a ``simple classifier'' to the performance of a ``more complex'' one.
This allows us to state what it means for the performance of a neural network to be ``almost entirely explained by a linear classifier'',
formalizing
Claims \ref{clm:informalinit} and \ref{clm:informalforget}.

\subsection{Notation and Preliminaries}
Key to our formalism are the quantities of mutual information and conditional mutual information.
Recall that for three random variables $X,Y,Z$, the \textit{mutual information} between $X$ and $Y$ is defined as $I(X;Y)=H(Y)-H(Y|X)$ and the \textit{conditional mutual information} between $X$ and $Y$ conditioned on $Z$ is defined as $I(X;Y|Z) = H(Y|Z)-H(Y|X,Z)$, where $H$ is the (conditional) entropy.

\begin{wrapfigure}{R}{0.35\textwidth}
\centering
\vspace{-1.65em}
\includegraphics[width=1\textwidth]{./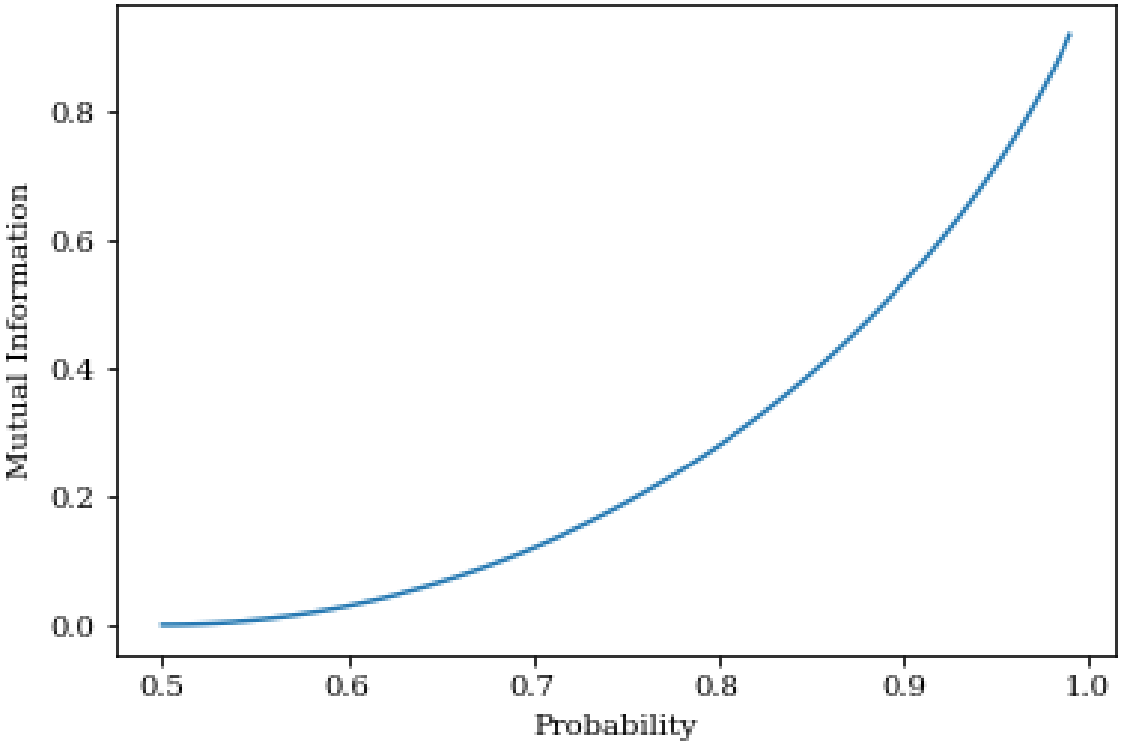}
\caption{$I(F;Y)$ as a function of $\Pr[F=Y]$ for unbiased binary $F,Y$ s.t. $\Pr[ F=Y] \geq 1/2$.}
\label{fig:mutinfvsprob}
\end{wrapfigure}
We consider a joint distribution $(X,Y)$ on data and labels $(\mathcal X, \mathcal Y)\subseteq \R^d\times \{ 0, 1 \}$.
For a classifier $f: 
\mathcal X \to \mathcal Y$ we use the capital letter $F$ to denote the random variable $f(X)$.
While the standard measure of prediction success is the accuracy $\Pr[ F=Y]$, we use the  mutual information $I(F;Y)$ instead.
This makes no qualitative difference since the two are monotonically related (see Figure~\ref{fig:mutinfvsprob}).
In all plots, we plot the corresponding accuracy axis on the right for ease of use.
We use $(X_S,Y_S)$ for  the empirical distribution over the training set, and use $I(F;Y_S)$ (with a slight abuse of notation) for the mutual information between $f(X_S)$ and $Y_S$, a proxy for $f$'s success on the training set.

\subsection{Performance correlation}

If $f$  and $\ell$ are classifiers, and as above $F$ and $L$ are the corresponding random variables, then the chain rule for mutual information implies that\footnote{Specifically, the equation can be derived by using the chain rule $I(A,B;C)= I(B; C | A) + I(A;C)$  to express $I(F,L;Y)$ as both $I(F;Y|L)+I(L;Y)$ and $I(L;Y|F)+I(F;Y)$.} 
\begin{equation*}\label{eq:mutualinf}
 I(F;Y) = I(F;Y|L) + I(L;Y) - I(L;Y|F).
\end{equation*}
We interpret the quantity $I(F;Y|L)$ as capturing the part of the success of $f$ on predicting $Y$ that cannot be explained by the classifier $\ell$.
For example, $I(F; Y | L) = 0$ if and only if the prediction $f(X)$
is conditionally independent of the label $Y$, when given $\ell(X)$.
In general, $I(F; Y | L)$
is the amount by which knowing $f(X)$ helps in predicting $Y$, given that we already know $\ell(X)$.
Based on this interpretation, we introduce the following definition:

\begin{definition}\label{def:perfcor} 
For random variables $F,L,Y$ we define the
\emph{performance correlation} of $F$ and $L$ as 
\[
\mu_Y(F;L) \vcentcolon= I(F;Y) - I(F;Y|L) = I(L;Y) - I(L;Y|F) \;.
\]
\end{definition}

The performance correlation is always upper bounded by the minimum of $I(L;Y)$ and $I(F;Y)$. If $\mu_Y(F;L)= I(F;Y)$ then $I(F;Y|L)=0$ which means that 
$f$ does not help in predicting $Y$, if we already know $\ell$.
Hence, when $\ell$ is a ``simpler'' model than $f$, we consider $\mu_Y(F;L)$ as denoting the part of $F$'s performance that can be attributed to $\ell$.\footnote{This interpretation is slightly complicated by the fact that, like correlation, $\mu_Y(F;L)$ can sometimes be negative.
However, this quantity is always non-negative under various weak assumptions which hold in practice, e.g. when both $F$ and $L$ have significant test accuracy, or when $H(Y | F, L) \geq \min\{H(Y|F), H(Y|L)\}$.
} 

Throughout this paper, we denote by $f_t$ the classifier SGD outputs on a randomly-initialized neural network after $t$ gradient steps, and denote by $F_t$ the corresponding random variable $f_t(X)$. We now formalize Claim 1 and Claim 2:

\textbf{Claim 1} (``Linear Learning'', Restated)\textbf{.} \emph{
In natural settings, there is a linear classifier $\ell$ and a step number $T_0$  such that for all $t \leq T_0$,  $\mu_Y(F_t;L) \approx I(F_t;Y)$.
That is, almost all of $f_t$'s performance is explained by $\ell$.
Furthermore at $T_0$, $I(F_{T_0};Y) \approx I(L;Y)$. That is, this initial phase lasts until $f_t$ approximately matches the performance of $\ell$.
 } 

\textbf{Claim 2} (Restated) \textbf{.}
\emph{In natural settings, for $t>T_0$, $\mu_Y(F_t;L)$ plateaus at value $\approx I(L;Y)$ and does not shrink significantly even if training continues until SGD fits all the training set.}

\section{SGD Learns a Linear Model First}
\label{sec:linear}

In this section, we provide experimental evidence for Claim \ref{clm:informalinit}---the first phase of SGD is dominated by ``linear learning''---and Claim \ref{clm:informalforget}---at later stages SGD retains information from early phases. 
We demonstrate these claims by evaluating our information-theoretic measures empirically on real and simulated classification tasks.

\paragraph{Experimental Setup.}
We provide a brief description of our experimental setup here;
a full description is provided in Appendix~\ref{app:exp_details}.
We consider the following binary classification tasks
\footnote{
We focus on binary classification because: (1) there is a natural choice for the ``simplest'' model class (i.e., linear models),
and (2) our mutual-information based metrics can be more accurately estimated from samples.
We have preliminary work extending our results to the multi-class setting.
}:

\vspace{-.2cm}
\begin{enumerate}[(i)]
    \item Binary MNIST: predict whether the image represents a number from $0$ to $4$ or from $5$ to $9$.
    \item CIFAR-10 Animals vs Objects: predict whether the image represents an animal or an object.
     \item CIFAR-10 First $5$ vs Last $5$: predict whether the image is in classes $\{0\dots 4\}$ or $\{5\dots 9\}$.
    \item High-dimensional sinusoid: predict $y \vcentcolon=\mathrm{sign}(\langle \bw,\bbx\rangle + \sin \langle \bw' ,\bbx \rangle )$ for standard Gaussian $\bbx \in\R^{100}$, and $\bw\bot \bw'$.
\end{enumerate}
\vskip -.2cm
We train neural networks with standard architectures: CNNs for image-recognition tasks and Multi-layer Perceptrons (MLPs) for the other tasks.
We use standard uniform Xavier initialization~\cite{glorot2010understanding} and we train with binary cross-entropy loss.
In all experiments, we use \emph{vanilla SGD} without regularization (e.g., dropout, weight decay) 
for simplicity and consistency.
(Preliminary experiments suggest our results are robust with respect to these choices).
We use a relatively small step-size for SGD, in order to more closely examine the early phase of training.

In all of our experiments, we compare the classifier $f_t$ output by SGD to a linear classifier $\ell$.
If the population distribution has a unique optimal linear classifier $\ell^*$ then we can use $\ell = \ell^*$. This is the case in tasks (i),(ii),(iv).
If there are different linear classifiers that perform equally well (task (iii)), then the classifier learned in the first stage could depend on the initialization.
In this case, we pick $\ell$ by searching for the linear classifier that best fits $f_{T_0}$, where $T_0$ is the step  in which $I(F_t;Y)$ reaches the best linear performance $\max_{L'} I(L';Y)$.
In either case, it is a highly non-trivial fact that there  is \emph{any} linear classifier that accounts for the bulk of the performance of the SGD-produced classifier $f_t$.

\paragraph{Results and Discussion.}
The results of our experiments are presented in Figure~\ref{fig:linear_plots}.
We observe the following similar behaviors across several architectures and datasets:

Define the \emph{first phase} of training as all steps $t \leq T_0$,
where $T_0$ is the first SGD step
such that the network's performance $I(F_{t}; Y)$ reaches the linear model's performance $I(L; Y)$.
Now:

\begin{enumerate}
    
    \item During the first phase of training, $\mu_Y(F_t;L)$ is close to $I(F_t;Y)$
    thus, most of the performance of $F_t$
    can be attributed to $\ell$.   In fact, we can often pick $\ell$ such that $I(L; Y)$ is close to $\max_{L'} I(L'; Y)$,
    the performance of the best linear classifier for the distribution.
    In this case, the fact that $\mu_Y(F_{T_0}; L) \approx I(F_{T_0}; Y) \approx \max_{L'} I(L'; Y)$
    means that SGD not only starts learning a linear model,
    but remains in the ``linear learning'' regime until it has learnt
    almost the best linear classifier.
    Beyond this point, the model $F_t$ cannot increase in performance without learning more non-linear aspects of $Y$.

    \item In the  following epochs, for $t>T_0$, $\mu_Y(F_t;L)$ plateaus around $I(L; Y)$.
    This means that $F_t$ retains its correlation with $L$, which keeps explaining as much of $F_t$'s generalization performance as possible. 
\end{enumerate}

\begin{figure}[t!]
    \begin{subfigure}{0.4\textwidth}
            \centering
            \includegraphics[width=\textwidth]{./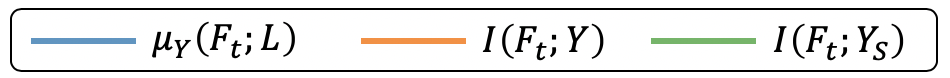}
    \end{subfigure}%
    \\
    \centering
    \begin{subfigure}{0.4\textwidth}
            \centering            \includegraphics[width=\textwidth]{./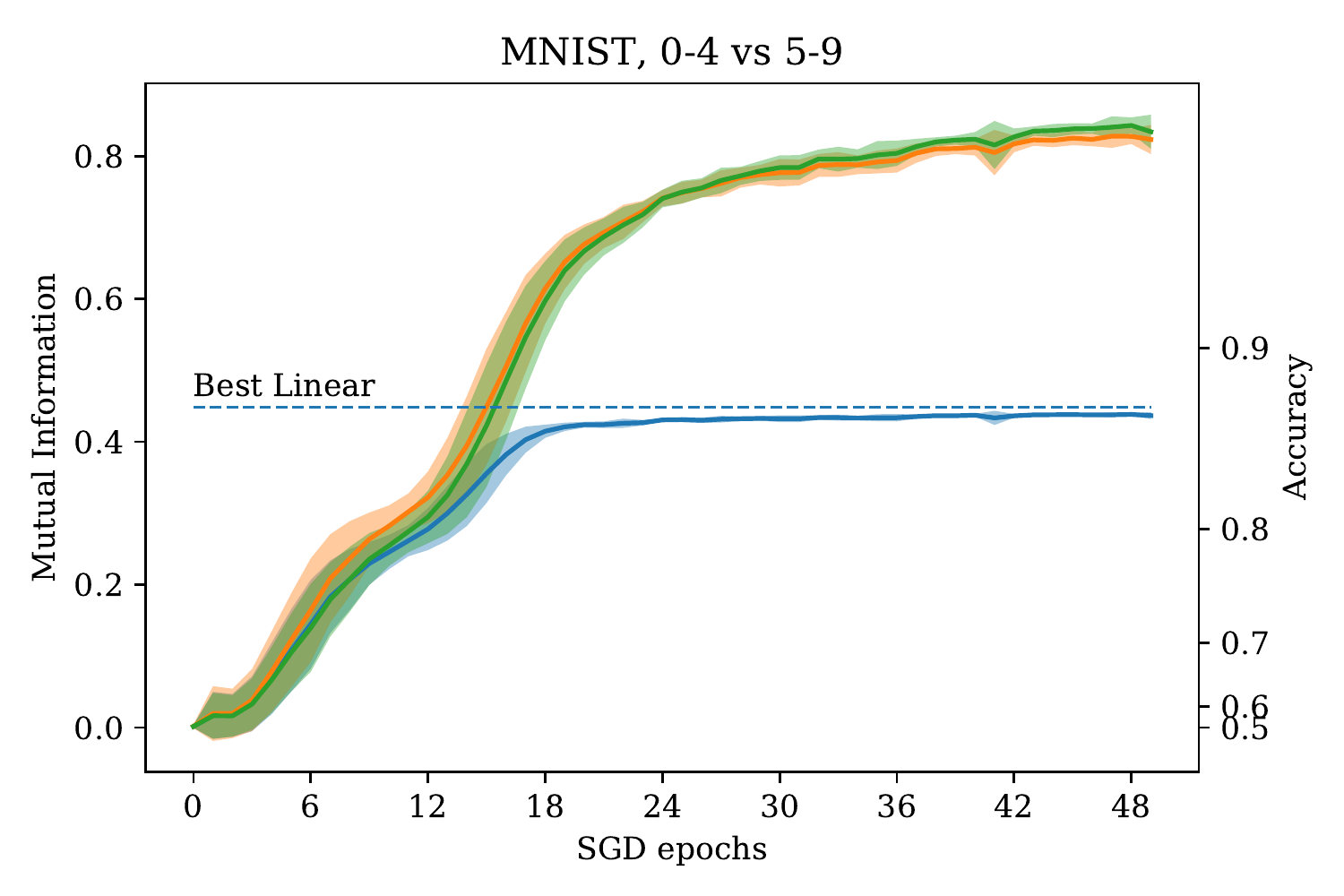}
    \end{subfigure}%
    \begin{subfigure}{0.4\textwidth}
            \centering
            \includegraphics[width=\textwidth]{./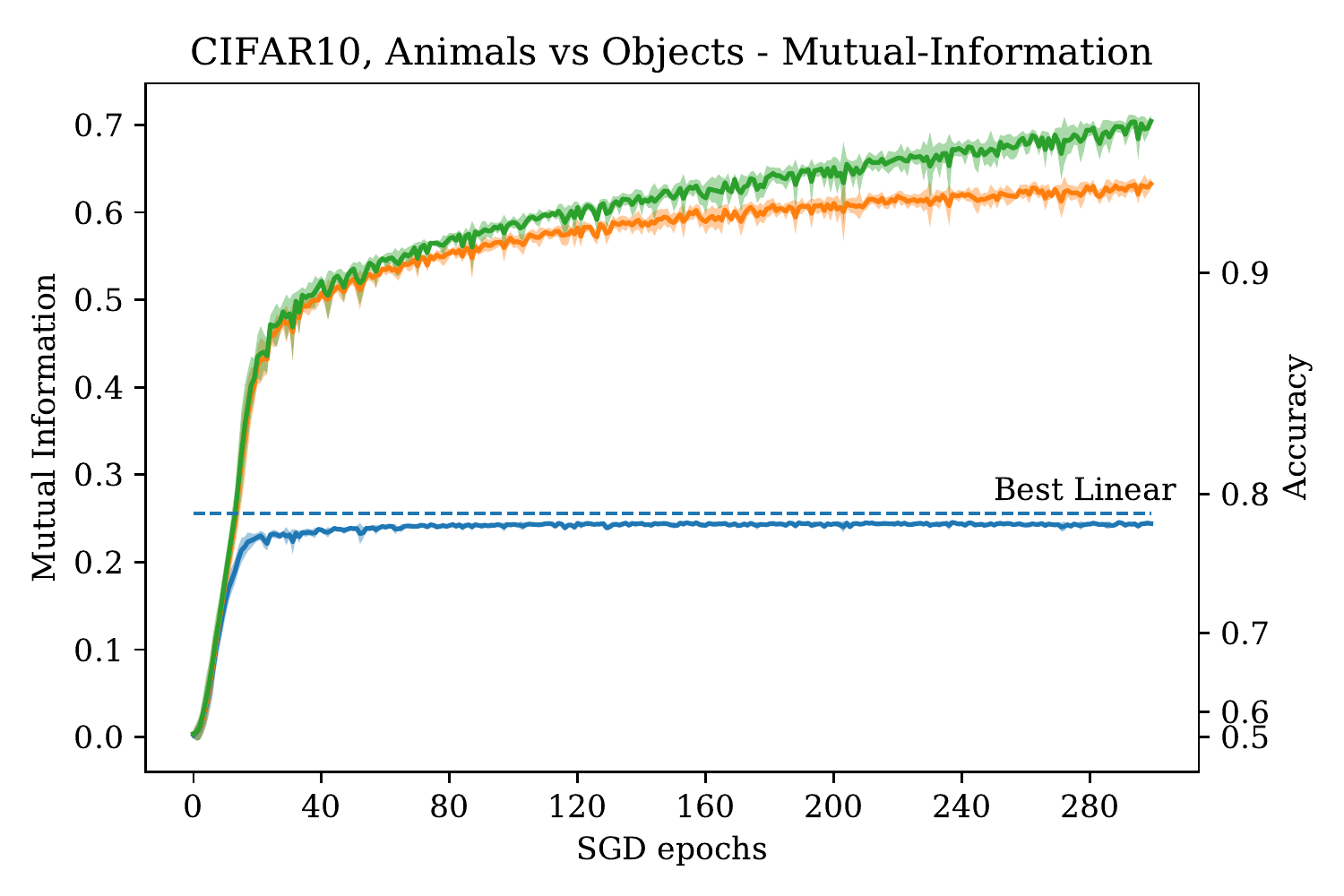}
    \end{subfigure}%
    \\
    \begin{subfigure}{0.4\textwidth}
            \centering
            \includegraphics[width=\textwidth]{./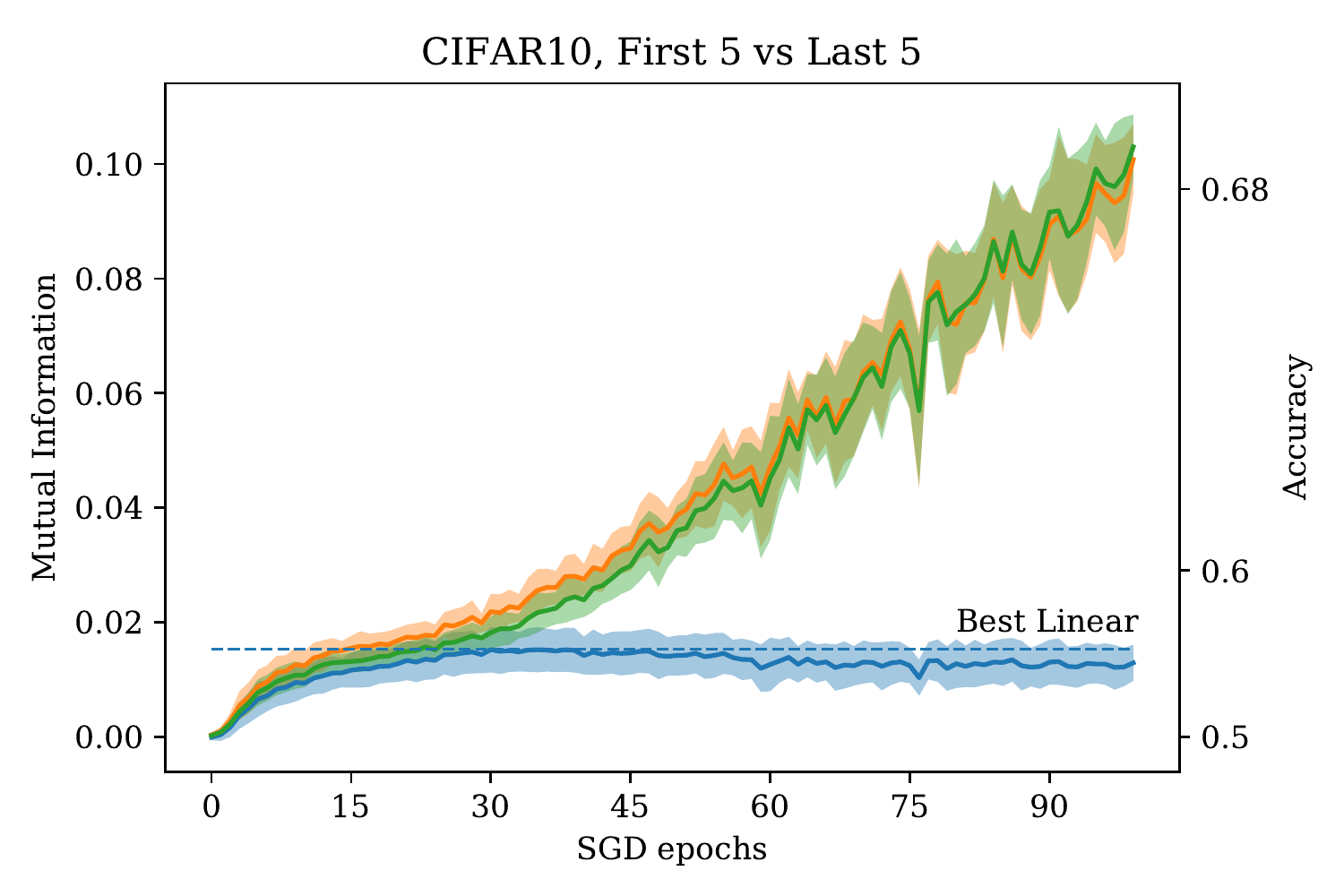}
    \end{subfigure}%
    \begin{subfigure}{0.4\textwidth}
            \centering
            \includegraphics[width=\textwidth]{./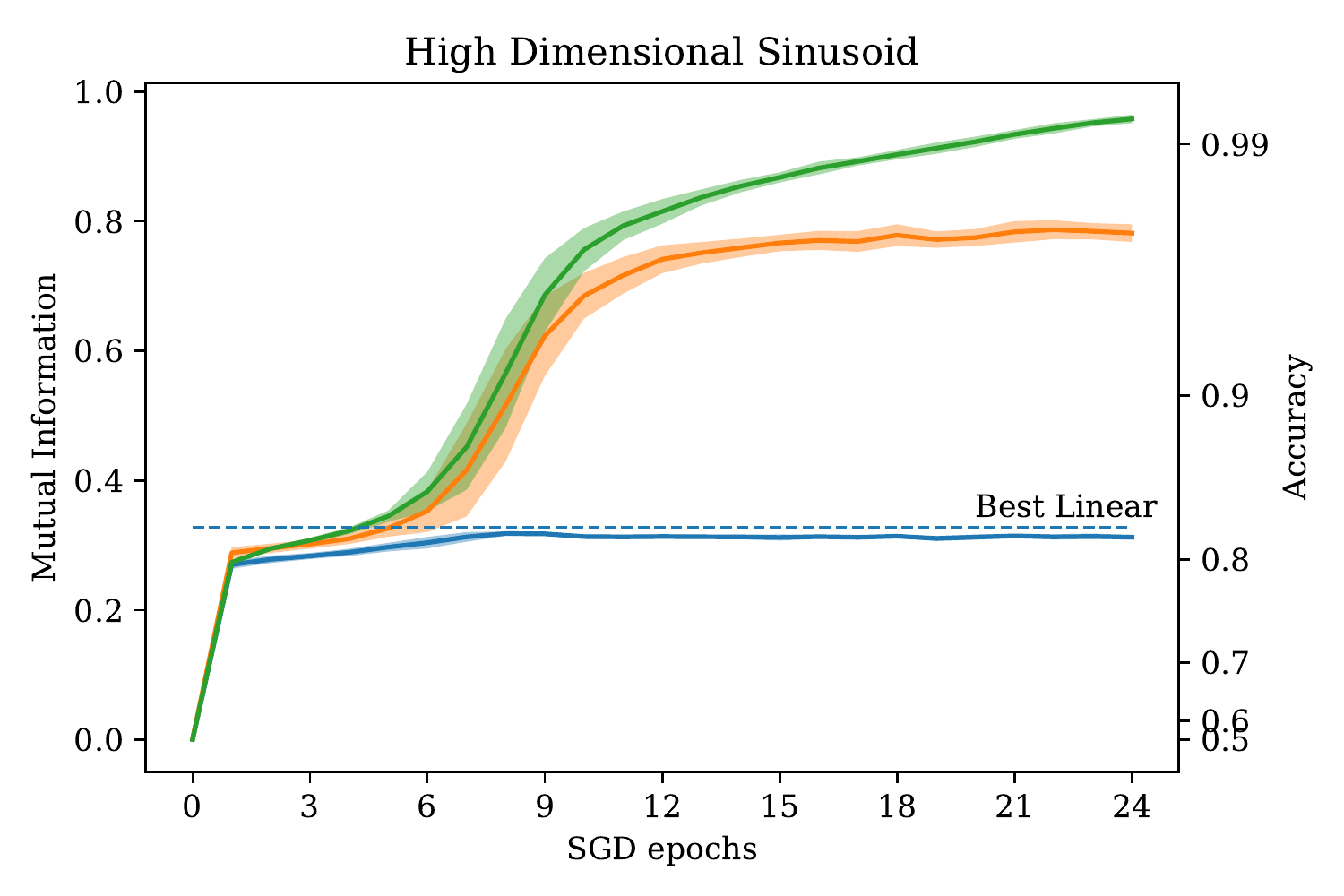}
    \end{subfigure}%
    
     \caption{SGD dynamics for various classification tasks. In each figure, we plot both the value of the mutual information and the corresponding accuracy.
     Observe that in the initial phases the bulk of the increase in performance is attributed to the linear classifier, since $\mu_Y(F; L) \approx I(F_t; Y)$.
    }
    \label{fig:linear_plots}
\end{figure}

Observation (1) provides strong support for Claim \ref{clm:informalinit}.
Since neural networks are a richer class than linear classifiers, a priori one might expect that throughout the learning process, some of the growth in the mutual information between the label $Y$ and the classifier's output $F_t$ will be attributable to the linear classifier, and some of this growth will be attributable to a more complex classifier.
However, what we observe is a relatively clean (though not perfect) separation of the learning process while in the initial phase, all of the mutual information between $F_t$ and $Y$ disappears if we condition on $L$.

To understand this result's significance, it is useful to contrast it with a ``null model''
where we replace the linear classifier $\ell$ by a random classifier $\tilde \ell$ having the same mutual information with $Y$ as $\ell$.\footnote{That is, $\tilde \ell(X)=Y$ with probability $p$ and 
random otherwise, where $p$ is set to ensure $I(\wt L;Y)=I(L; Y)$.}
Now, consider the ratio
$\mu_Y(F_{T_0};\wt L)/I(F_{T_0};Y)$
at the end of the first phase.
It can be shown that this ratio is small,
meaning that the performance of $F_t$ is not well explained by $\wt L$.
However, in our case with a linear classifier, this ratio is much closer
to $1$ at the end of the first phase.
For example, for CIFAR (iii), the linear model $L$ has 
$\mu_Y(F_{T_0}; L)/I(F_{T_0};Y) = 0.80$
while the corresponding null model $\wt{L}$ has
ratio 
$\mu_Y(F_{T_0}; \wt{L})/I(F_{T_0};Y) = 0.31$.
This illustrates that the early stage of learning is biased specifically towards linear functions, and not towards arbitrary functions with non-trivial accuracy.
Similar metrics for all datasets are reported in Table~\ref{tab:sec3exp} in the Appendix.

Observation (2) can be seen as offering support to Claim~\ref{clm:informalforget}. If SGD ``forgets'' the linear model as it continues to fit the training examples, then we would expect the value of $\mu_Y(F_t;L)$ to \textit{shrink} with time. However, this does not occur.
Since the linear classifier itself would generalize, this explains at least part of the generalization performance of  $F_t$. To fully explain the generalization performance, we would need to extend this theory to models more complex than linear; some preliminary investigations are given in Section~\ref{sec:complex}.

Table~\ref{table:introtable} summarizes the qualitative behavior of several information theoretic quantities we observe across different datasets and architectures.
We stress that these phenomena would not occur for an arbitrary learning algorithm that increases model test accuracy. Rather, it is SGD (with a random, or at least ``non-pathological'' initialization, see Section~\ref{sec:theory} and Figure~\ref{fig:init}) that produces such behavior.

\begin{table}[h]
\begin{center}
\resizebox{\textwidth}{!}{
\begin{tabular}{ |c||c c c c c c| }
\hline
& & & & & & \\
 &  {\huge Train acc} & {\huge Test acc} & {\huge $I(F_t;Y)$} & {\huge $\mu_Y(F_t;L)$} &  {\huge $I(F_t;Y\,|\,L)$} & {\huge $I(L;Y\,|\,F_t)$} \\ 
 & & & & & & \\
 \hline
 \hline
  &  &  &  &  &  & \\  
  &  &  &  &  &  & \\  

 {\huge First phase} & {\Huge $\uparrow$} & {\Huge $\uparrow$} & {\Huge $\uparrow$} & {\huge $\approx I(F_t;Y)$}   & {\Huge $\approx 0$}  & {\Huge $\downarrow$}  \\ 
 & & & & {\Large increase in acc of $F_t$ explained by $L$} & & {\Large $F_t$ starts correlating with $L$} \\
 & & & & & & \\

  \hline
 & & &  & & & \\ 
  &  &  & &  &  &  \\
{\huge Middle phase} & {\Huge $\uparrow$} & {\Huge $\uparrow$} &{\Huge $\uparrow$}  & {\Huge -- }  & {\Huge $\uparrow$} & {\Huge $\approx 0$}  \\  
  & & &  & {\Large plateaus near $I(L;Y)$} & {\Large $F_t$ becomes more expressive than $L$} & {\Large $F_t$ doesn't forget $L$} \\
  & & & & & & \\

\hline
 &  &  & &  &  &  \\
 &  &  & &  &  &  \\
{\huge Overfitting} & {\Huge $\uparrow$} & {\Huge --} & {\Huge --} &{\Huge --} & {\Huge --} & {\Huge $\approx 0$} \\
 & {\Large overfit to train set} & {\Large overfitting doesn't hurt or improve test} & & & & {\Large $F_t$ still doesn't forget $L$}\\
 & & & & & & \\
 \hline
\end{tabular}
}
\end{center}
\caption{Qualitative behavior of the quantities of interest in our experiments. We denote with $\uparrow$, $\downarrow$ and -- increasing, decreasing and constant values respectively.}
\label{table:introtable}
\end{table}

\section{Beyond Linear: SGD Learns Functions of Increasing Complexity} 
\label{sec:complex}

In this section we investigate Remark \ref{rem:beyond_linear}---that SGD learns functions of increasing complexity---through the lens of the mutual information framework,  
and provide experimental evidence supporting the natural extension of the results from Section~\ref{sec:linear} to models more complex than linear. 

\textbf{Conjecture 1} (Beyond  linear  classifiers: Remark~\ref{rem:beyond_linear} restated)\textbf{.} \emph{    There exist increasingly complex functions $(g_1, g_2, ...) $ under some measure of complexity, and a monotonically increasing sequence $(T_1, T_2, ...)$ such that \(\mu_Y(F_t; G_i) \approx I(F_t; Y)\) for \(t \leq T_i\) and \(\mu_Y(F_t; G_i) \approx I(G_i; Y)\) for \(t > T_i\).
}
\footnote{Note that implicit in our conjecture is that each \(G_i\) is itself explained by \(G_{<i}\), so we should not have to condition on all previous \(G_i\)'s; i.e.\ \(\mu_Y(F_t; (G_{1:i})) \approx \mu_Y(F_t; G_i)\).}

It is problematic to show Conjecture \ref{rem:beyond_linear} in full generality, as the correct measure of complexity is unclear; it may depend on the distribution, architecture, and even initialization. Nevertheless, we are able to support it in the image-classification setting, parameterizing complexity using the number of convolutional layers.

\paragraph{Experimental Setup.}
In order to explore
the behavior of more complex classifiers we consider the CIFAR ``First 5 vs.\ Last 5'' task introduced in Section~\ref{sec:linear}, for which there is no high-accuracy linear classifier. We observed that the performance of various architectures on this task was similar to their performance on the full 10-way CIFAR classification task, which supports the relevance of this example to standard use-cases.\footnote{Potentially since we need to distinguish between visually similar classes, e.g.\ automobile/truck or cat/dog.}

As our model $f$, we train an $18$-layer pre-activation ResNet described in \cite{he2016identity} which achieves over \(90\)\% accuracy on this task. For the simple models \(g_i\), we use convolutional neural networks corresponding to the $2$nd, $4$th, and $6$th shallowest layers of the network for \(f\). 
Similarly to Section 3, the models \(g_i\) are trained on the images labeled by \(f_\infty\) (that is the model at the end of training). For more details refer to Appendix~\ref{app:exp_details}: ``Finding the Conditional Models".

\begin{figure}[t!]

\floatbox[{\capbeside\thisfloatsetup{capbesideposition={right,top},capbesidewidth=4.2cm}}]{figure}[\FBwidth]
{\vspace{0.5cm}\caption{Distinguishing between the first vs. the last 5 classes of CIFAR10. CNN$k$ denotes a convolutional neural network of $k$ layers. 
We clearly see a \emph{separation in phases of learning}, where all curves $\mu_Y(F_t; G_i)$ are initially close to \(I(F_t, Y)\), before each successively plateaus as training progresses.  The plot matches the conjectured behavior illustrated in Figure~\ref{fig:manyphasecartoon}.
}\label{fig:Cifarhard}}
{
    \includegraphics[width=0.52\textwidth]{./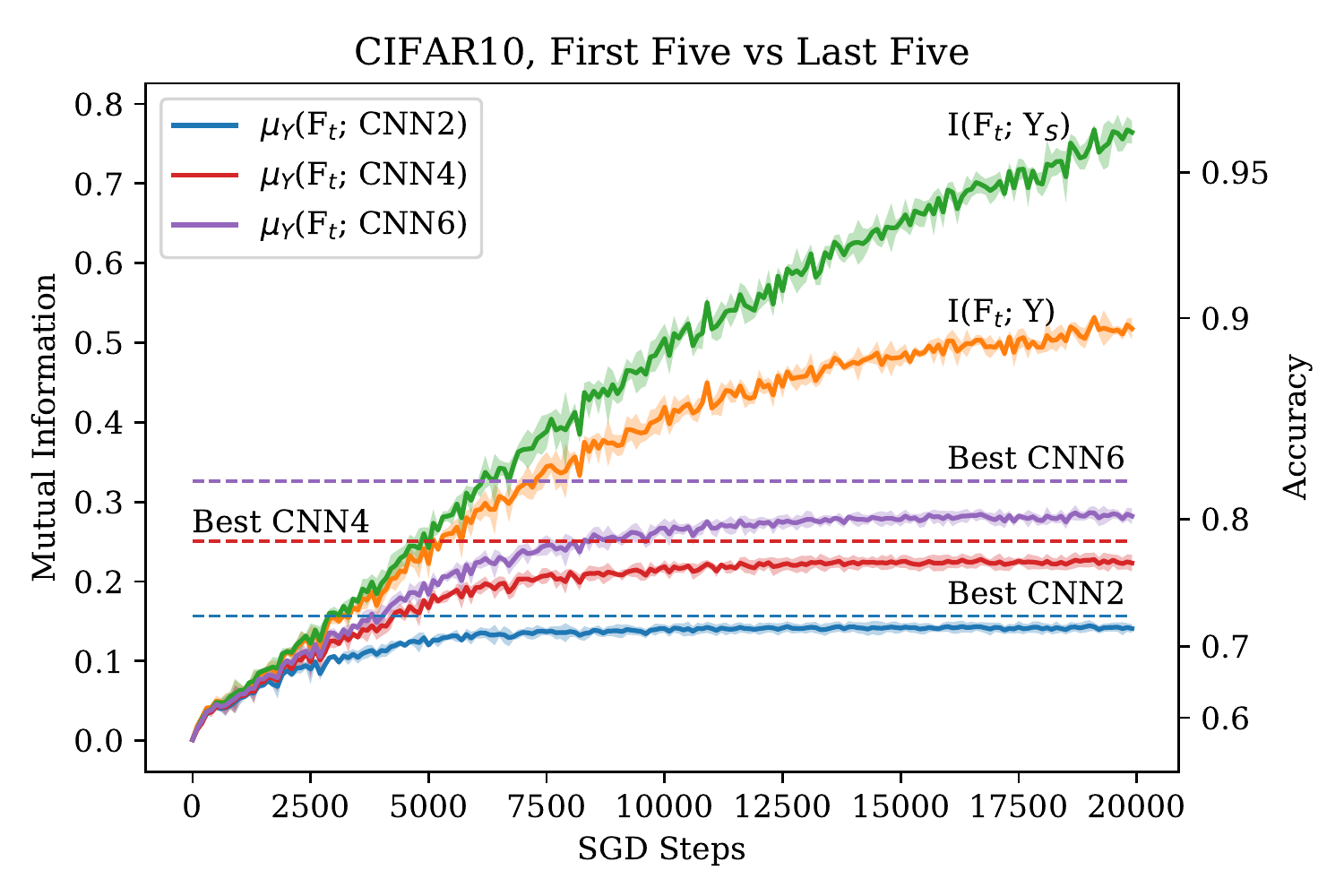}
} \end{figure}

\paragraph{Results and Discussion.}
Our results are illustrated in Figure~\ref{fig:Cifarhard}. We can see a \emph{separation in phases for learning}, where all curves $\mu_Y(F_t; G_i)$ are initially close to \(I(F_t; Y)\), before each successively plateaus as training progresses. 
Moreover, note that \(I(G_i; Y)\) remains flat in the overfitting regime for all three \(i\), demonstrating that SGD does not ``forget'' the simpler functions as stated in Claim~\ref{clm:informalforget}.

Interestingly, the 4 and 6-layer CNNs exhibit less clear phase separation than the 2-layer CNN and linear model of Section 3. We attribute this to two possibilities---firstly, training \(g_i\) on \(f_\infty\) for larger models likely may not recover the \textit{best} possible simple classifier that explains $f_t$
\footnote{In the extreme case, $g_i$ has same architecture as $f$. We cannot recover $f$ exactly by training on its outputs.};
secondly, the number of layers may not be a perfect approximation to the notion of simplicity. However, we can again verify our qualitative results by comparing to a random ``null model'' classifier $\tilde g_i$ with the same accuracy as $g_i$. For the 6-layer CNN, $\mu(F_{T_0};G_i)/I(F_{T_0}; Y) = 0.72$, while $\mu(F_{T_0};\wt G_i)/I(F_{T_0}; Y) = 0.40$, with $T_0$ estimated as before (see Table~\ref{tab:sec4exp} in Appendix B for the 2 and 4-layer numbers). Thus, $g_i$ explains the behavior of $f_t$ significantly more than an arbitrary classifier of equivalent accuracy.

\section{Overfitting Does Not Hurt Generalization}
\label{sec:theory}

In the previous, sections we investigated the early and middle phases of SGD training. In this section, we focus on the last phase, i.e.\ the overfitting regime.
In practice, we often observe that in late phases of training, train error goes to $0$, while test error stabilizes, despite the fact that bad ERMs exist.
The previous sections suggest that this phenomenon is an inherent property of SGD
in the overparameterized setting, where training starts from a ``simpler'' model at the beginning of the overfitting regime and does not forget it even as it learns more ``complex'' models and fits the noise.

In what follows, we demonstrate this intuition formally in an illustrative simplified setting where, \textit{provably}, a heavily overparameterized (linear) model trained with SGD fits the training set exactly, and yet its population accuracy is optimal for a class of ``simple'' initializations.

\paragraph{The Model.} We confine ourselves to the linear classification setting. To formalize notions of ``simple'' we consider a data distribution that explicitly decomposes into a component explainable by a \textit{sparse} classifier, and a remaining orthogonal noisy component on which it is possible to overfit. Specifically, we define the data distribution $\mathcal{D}$ as follows:
\begin{align*} 
  y \stackrel{u.a.r.}{\sim} \{-1,+1\},  & &
  \bbx = \eta \cdot y\cdot \bm{e}_1 + \bm{e}_k, & & &  k \stackrel{u.a.r.}{\sim} \{2, \ldots, d\},\\
& & & & &   \eta \sim \text{Bernoulli}(p) \text{ over } \{\pm 1\}.
\end{align*}
Here $\bm{e}_i$ refers to the $i^{th}$ vector of the standard basis of $\mathbb{R}^d$, while $p \leq 1/2$ is a noise parameter. 
For $1-p$ fraction of the points the first coordinate corresponds to the label, but a $p$ fraction of the points are ``noisy'', i.e., their label is the opposite of their first coordinate. Notice that the classes are essentially linearly separable up to error $p$.

We  deal with the heavily overparameterized regime, i.e., when we are presented with only $n = o(\sqrt{d})$ samples. We  analyze the learning of a linear classifier $\bw \in \Real^d$ by minimizing the empirical square loss
$\mathcal{L} (\bw) = \frac{1}{n}\sum_{i=1}^n  \left(1-y_n \langle \bw , \bxi \rangle\right)^2$ using SGD.
Key to our setting is the existence of \textit{poor} ERMs---classifiers that have $\leq 50\%$ population accuracy but achieve 100\% training accuracy by taking advantage of the $\bm{e}_k$ components of the sample points, which are noise, not signal. We show however, that as long as we begin not too far from the ``simplest'' classifier $\bw^* = \bm{e}_1$, the ERM found by SGD generalizes well. 
This holds empirically even for more complex models (Fig \ref{fig:init} in App~\ref{app:plots}).

\medskip

\begin{restatable}{thm}{sgdgood}\label{thm:sgd-good}
	Consider training a linear classifier via minimizing the empirical square loss using SGD. Let $\epsilon > 0$ be a small constant and let the initial vector $\bw_0$ satisfy $\bw_0(1) \geq -n^{0.99}$, and $|\bw_0(i)| \leq 1 - 2p - \epsilon$ for all $i>1$. Then, with high probability, sample accuracy approaches 1 and population accuracy approaches $1-p$ as the number of gradient steps goes to infinity.
\end{restatable}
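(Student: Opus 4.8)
The plan is to reduce the SGD dynamics to a low‑dimensional linear system, solve exactly for the limiting value of $\bw(1)$, and verify that the hypotheses on $\bw_0$ are precisely what is needed to place that limit (and every coordinate that SGD never touches) on the correct side of the threshold $1-2p-\epsilon$. First I would observe that since $n=o(\sqrt d)$, a birthday‑bound argument shows the noise coordinates $k_1,\dots,k_n$ are distinct with probability $1-o(1)$; condition on this event. Writing $a_t\vcentcolon=\bw_t(1)$, $c_{i,t}\vcentcolon= y_i\bw_t(k_i)$, and the residual $u_{i,t}\vcentcolon= 1-y_i\langle\bw_t,\bxi\rangle = 1-\eta_i a_t - c_{i,t}$ (so sample $i$'s square loss is $u_{i,t}^2$), a direct computation shows that one SGD step on sample $i$ with step size $\gamma$ acts only on coordinates $1$ and $k_i$ and sends $a\mapsto a+2\gamma\eta_i u_i$, $c_i\mapsto c_i+2\gamma u_i$, $u_i\mapsto(1-4\gamma)u_i$, and $u_j\mapsto u_j-2\gamma\eta_i\eta_j u_i$ for $j\neq i$; in particular every coordinate of $\bw$ outside $\{1,k_1,\dots,k_n\}$ is frozen at its initial value forever. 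This is exactly the realizable least‑squares instance whose zero‑loss set is the line $\{(a,c_1,\dots,c_n): c_i=1-\eta_i a \text{ for all } i\}$ parameterized by $a$.

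Next I would show that for a sufficiently small step size the residual vector $u_t=(u_{1,t},\dots,u_{n,t})$ converges to $0$ almost surely and geometrically fast: averaging the one‑step update matrices $B_i$ gives $\tfrac1n\sum_i B_i^\top B_i\preceq(1-\kappa)I$ for some $\kappa=\Theta(\gamma/n)>0$, hence $\mathbb{E}\|u_t\|^2\le(1-\kappa)^t\|u_0\|^2$; a Borel--Cantelli argument then yields $u_t\to0$ a.s., and since $|a_{t+1}-a_t|=2\gamma|u_{i_t,t}|$ has summable expectation, $a_t$ converges a.s.\ to a limit $a_\infty$ and each $c_{i,t}\to c_{i,\infty}$ (alternatively this is just the standard behavior of SGD / randomized Kaczmarz on a consistent linear system and could be cited). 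Because $u_{i,\infty}=0$ we get $c_{i,\infty}=1-\eta_i a_\infty$, and comparing the two expressions $a_\infty-a_0=\sum_t 2\gamma\eta_{i_t}u_{i_t,t}=\sum_i\eta_i(c_{i,\infty}-c_{i,0})$ with $c_{i,\infty}-c_{i,0}=1-\eta_i a_\infty-c_{i,0}$ gives the conservation law $(n+1)(a_\infty-a_0)=\sum_i\eta_i(1-\eta_i a_0-c_{i,0})$, i.e.
\[ a_\infty \;=\; \frac{\bw_0(1)+\sum_{i=1}^n \eta_i\,(1-c_{i,0})}{n+1}. \]

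Now I would estimate the numerator. Since $\eta_i,y_i,k_i$ are mutually independent and $c_{i,0}=y_i\bw_0(k_i)$ with $\bw_0$ fixed, $\mathbb{E}[\eta_i(1-c_{i,0})]=\mathbb{E}[\eta_i]=1-2p$ and each summand lies in $[-2,2]$, so Hoeffding gives $\sum_i\eta_i(1-c_{i,0})=(1-2p)n\pm o(n)$ with high probability (indeed $\pm O(n^{0.6})$ except with probability $\exp(-\Omega(n^{0.2}))$). Combined with the hypothesis $\bw_0(1)\ge -n^{0.99}=o(n)$ — and noting that a positive $\bw_0(1)$ only helps — this forces $a_\infty\ge(1-2p)-O(n^{-0.01})$, which exceeds $1-2p-\epsilon$ once $n$ is large (this is exactly why the hypothesis scales $\bw_0(1)$ as $n^{0.99}$), and $a_\infty>0$ because the constraint $|\bw_0(i)|\le 1-2p-\epsilon$ is satisfiable only when $p<1/2$.

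Finally I would read off the accuracies. At the limit all residuals vanish, so $y_i\langle\bw_\infty,\bxi\rangle=1>0$ for every training point; hence sample accuracy is $1$ (and remains $1$ for all large $t$). For a fresh test point $(\bbx,y)$ with noise coordinate $k$: with probability $1-n/(d-1)=1-o(1)$ we have $k\notin\{k_1,\dots,k_n\}$, so $\bw_\infty(k)=\bw_0(k)$ and $\langle\bw_\infty,\bbx\rangle=\eta y\,a_\infty+\bw_0(k)$; since $a_\infty>1-2p-\epsilon\ge|\bw_0(k)|$ and $a_\infty>0$, the sign equals $\eta y$, which equals $y$ exactly when $\eta=+1$ — an event of probability $1-p$. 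So the signal coordinate, pinned near $1-2p$ by the conservation law, dominates every noise coordinate, whose magnitude is capped at $1-2p-\epsilon$; dominated convergence (using $a_t\to a_\infty$) then gives population accuracy $\to 1-p$ up to the $o(1)$ collision term, with the high‑probability claim taken over the draw of the training sample. The hard part will be making the convergence step fully rigorous — boundedness of the iterates and a.s.\ convergence to the \emph{specific} minimizer $\bw_\infty$, not merely that the training loss tends to $0$ — which is classical for realizable least squares but needs the supermartingale / second‑moment argument above; by contrast the load‑bearing conceptual step, the exact formula for $a_\infty$ together with the realization that the initialization hypotheses exactly control where $a_\infty$ lands, is short.
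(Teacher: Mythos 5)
Your proposal is correct, and it lands on exactly the same structural skeleton as the paper's proof: characterize the SGD limit as the unique zero-loss point in $\bw_0 + \mathrm{RowSpan}(\bbX)$, obtain an exact formula for the limiting first coordinate, show it concentrates near $1-2p$, and observe that the coordinates SGD never touches keep their initial values, which are capped at $1-2p-\epsilon$. Your limiting formula $a_\infty = \bigl(\bw_0(1)+\sum_i \eta_i(1-c_{i,0})\bigr)/(n+1)$ is algebraically identical to the paper's Equation~\eqref{eq:first-co} after expanding $\bs^T\bbX\bw_0$. The route to that formula differs: the paper proves a standalone lemma computing the limit as $\bbX^T(\bbX\bbX^T)^{-1}(\bm{1}-\bbX\bw_0)+\bw_0$ and inverts $\bbX\bbX^T=\bm{I}+\bs\bs^T$ via Sherman--Morrison, whereas you derive the same identity from a conservation law tying $a_\infty - a_0$ to the displacements of the touched noise coordinates, using $\eta_i^2=1$. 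Your version is more elementary (no matrix inversion) and is more honest about the convergence step --- the paper dismisses it by asserting strong convexity, which holds only on the row span, while you correctly identify this as a consistent-linear-system / randomized-Kaczmarz convergence fact that needs its own (standard) argument. Two small points to tighten: the paper assumes exactly a $p$ fraction of flips so that $\sum_i\eta_i=(1-2p)n$ deterministically and only the cross term needs concentration (it uses Chebyshev where you use Hoeffding --- either works); and since $\bw_0(1)$ is only bounded below, you should state the conclusion as $a_\infty \geq 1-2p-o(1)$ rather than an equality, which you do flag but which the paper's displayed equation technically elides as well.
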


\vspace{-.2cm}
\begin{proof}[Proof sketch]
The displacement of the weight vector from initialization will always lie in the span of the sample vectors which, because the samples are sparse, is in expectation almost orthogonal to the population. Moreover, as long as the initialization is bounded sufficiently, the first coordinate of the learned vector will approach a constant. The full proof is deferred to Appendix \ref{sec:proof}.
\end{proof}

\vspace{-.2cm}
Theorem \ref{thm:sgd-good} implies in particular that if we initialize at a good bounded model (such as $\bw^*$), a version of Claim \ref{clm:informalforget} provably applies to this setting: if $F_t$ corresponds to the model at SGD step $t$ and $\ell$ corresponds to $\bw^*$, then $\mu_Y(F_t;L)$ will barely decrease in the long term.

\section{Discussion and Future Work}
\label{sec:conclusion}
Our findings yield new insight into the inductive bias of SGD on deep neural networks. 
In particular, it appears that SGD increases the complexity of the learned classifier as training progresses, starting by learning an essentially linear classifier.

There are several natural questions that arise from our work.
First, \emph{why} does this ``linear learning'' occur?
We pose this problem of understanding why Claims 1 and 2 are true as an important direction for future work.
Second, what is the correct measure of complexity which SGD increases over time? 
That is, we would like the correct formalization of Conjecture 1---ideally with a measure of complexity that implies generalization.
We view our work as an initial step in a framework towards understanding why neural networks generalize,
and we believe that theoretically establishing our claims would be significant progress in this direction.

{\bf Acknowledgements.}
We thank all of the participants of the Harvard ML Theory Reading Group
for many useful discussions and presentations that motivated this work.
We especially thank:
Noah Golowich, Yamini Bansal, Thibaut Horel, Jaros\l aw B\l asiok,
Alexander Rakhlin, and Madhu Sudan.

This work was supported by NSF awards CCF 1565264, CNS 1618026, CCF 1565641, CCF 1715187,
NSF GRFP Grant No. DGE1144152, a Simons Investigator Fellowship, and Investigator Award.


\bibliographystyle{plain}
\bibliography{refs}
\newpage
\appendix
\section{Proof of Theorem \ref{thm:sgd-good}}\label{sec:proof}

For the simplicity of our argument, we work with the following assumptions on the training set.
\begin{assumption}
Label noise: Exactly $p$ fraction of the sample points have their first coordinate flipped.
\end{assumption}

\begin{assumption}\label{asp:2}
 Orthogonality:  the non-zero coordinates are distinct for all $n$ data points (except for the first coordinate)
\end{assumption}
Notice that by the fact that $n = o(\sqrt{d})$ and a simple union bound, Assumption~\ref{asp:2} holds with high probability. For each $i\in [d]$, we let $j(i)$ denote the index $j$ that satisfies $\bbx_j(i) = 1$, if it exists. To simplify the notation, we assume that all labels $y_i$ are $1$; this is without loss of generality, since one can always replace $\bxi$ with $y_i \bxi$.

In order to prove Theorem \ref{thm:sgd-good}, we will precisely characterize the limiting behavior of SGD in this setting. We remark that as the optimization objective is strongly convex, SGD and GD with appropriate choices of step size is guaranteed to converge to the global minimum. 

\begin{lemma}[Convergence of gradient descent]\label{lem:gd}
Assume the setting above. Let $\bbX \in \Real^{n\times d}$ be the data matrix. Initialized at point $\bw_0$,
(stochastic) gradient descent converges to 
\begin{equation}\label{eq:gd-conv}
    \bw' = \bbX^T (\bbX\bbX^T)^{-1}(\bm{1} - \bm{X}\bw_0) + \bw_0,
\end{equation}
as the number of steps goes to infinity. Moreover, we have 
\begin{equation}\label{eq:first-co}
    \bw'(1) =  \left(\frac{n}{n+1}\right) \eta - \frac{
    \bs^T \bbX \bw_0}{n+1} + \bw_0(1),
\end{equation}
 where $\bs$ is the first column of $\bm{X}$ and $\eta = 1-2p$, and  for $i\neq 1$
 \begin{equation}\label{eq:other-co}
 \bw'(i) = \begin{cases}
 \bw_0(i) & \text{if } \bbx_j (i) = 0 \text{  for all } \bbx_j,\\
 \beta\left(1 + \bs_{j(i)}  \left(\frac{-n\eta +  \bs^T \bbX \bw_0}{n+1}\right) \eta - \bbx_{j(i)}^T\bw_0\right) + \bw_0(i)  & \text{if } \bbx_j(i) = \beta \in \{\pm 1\} \text{ for some } \bbx_j
 \end{cases}
 \end{equation}
 \end{lemma}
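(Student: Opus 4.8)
\textbf{Proof plan for Lemma~\ref{lem:gd}.}
The plan is to first establish the closed-form limit \eqref{eq:gd-conv}, and then unpack it coordinate-by-coordinate using the sparsity structure guaranteed by Assumptions~1 and~2. For the convergence statement, I would use the fact that the empirical square loss $\mathcal L(\bw)$ is convex and that its gradient always lies in the row span of $\bbX$ (since $\nabla \mathcal L(\bw) = -\tfrac{2}{n}\bbX^T(\bm 1 - \bbX\bw)$, after absorbing labels into the samples). Hence every GD or SGD iterate stays in the affine subspace $\bw_0 + \mathrm{row}(\bbX)$. Writing $\bw = \bw_0 + \bbX^T\bm{v}$ for some $\bm v\in\Real^n$, the loss becomes a strongly convex quadratic in $\bm v$ on this subspace (here we use that $\bbX\bbX^T$ is invertible, which follows from Assumption~\ref{asp:2} since the rows of $\bbX$ are linearly independent once the off-first coordinates are disjoint). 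Strong convexity on the subspace gives convergence to the unique minimizer, which is exactly the minimum-displacement interpolating solution: setting $\bbX\bw' = \bm 1$ and $\bw' - \bw_0\in\mathrm{row}(\bbX)$ yields $\bm v = (\bbX\bbX^T)^{-1}(\bm 1 - \bbX\bw_0)$, i.e.\ \eqref{eq:gd-conv}. For SGD one additionally invokes standard convergence-of-SGD-on-strongly-convex-objectives arguments (with appropriately decaying or small step size) to reach the same global minimum.

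The second part is a direct computation exploiting the special form of $\bbX$. By Assumption~\ref{asp:2} each data point has exactly two nonzero coordinates: the first coordinate $\bbx_j(1) = \bs_j \in \{\pm 1\}$ (the $\eta y_j$ term, which after the $y_i\mapsto 1$ reduction is $\bs_j$), and a distinct coordinate $k_j \neq 1$ where $\bbx_j(k_j) = 1$. Thus $\bbX\bbX^T = \bm 1\bm 1^T/1 + I$... more precisely $(\bbX\bbX^T)_{jj} = \bs_j^2 + 1 = 2$ and $(\bbX\bbX^T)_{jj'} = \bs_j\bs_{j'}$ for $j\neq j'$ (the off-first coordinates contribute nothing since the $k_j$ are distinct). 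So $\bbX\bbX^T = I + \bs\bs^T$ where $\bs = (\bs_1,\dots,\bs_n)^T$ is the first column of $\bbX$, and by Sherman--Morrison $(\bbX\bbX^T)^{-1} = I - \tfrac{\bs\bs^T}{n+1}$ (using $\|\bs\|^2 = n$). Plugging this into \eqref{eq:gd-conv} and reading off the first coordinate: $\bw'(1) = \bs^T(\bbX\bbX^T)^{-1}(\bm 1 - \bbX\bw_0) + \bw_0(1)$. Now $\bs^T(\bbX\bbX^T)^{-1} = \bs^T - \tfrac{n}{n+1}\bs^T = \tfrac{1}{n+1}\bs^T$, and $\bs^T\bm 1 = \sum_j \bs_j = n(1-2p) = n\eta$ by the label-noise assumption (exactly $p$ fraction of the $\bs_j$ equal $-1$). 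This gives $\bw'(1) = \tfrac{1}{n+1}(n\eta - \bs^T\bbX\bw_0) + \bw_0(1)$, matching \eqref{eq:first-co}. For a coordinate $i\neq 1$: if no sample is supported on $i$, then the $i$-th row of $\bbX^T$ is zero so $\bw'(i) = \bw_0(i)$. Otherwise there is a unique $j = j(i)$ with $\bbx_{j}(i) = \beta\in\{\pm1\}$ (in the model $\beta = 1$, but we keep it general), and $\bw'(i) = \beta\,\bm{e}_{j}^T(\bbX\bbX^T)^{-1}(\bm 1 - \bbX\bw_0) + \bw_0(i)$. Expanding $\bm e_{j}^T(\bbX\bbX^T)^{-1} = \bm e_{j}^T - \tfrac{\bs_{j}}{n+1}\bs^T$ and using $\bm e_j^T\bm 1 = 1$, $\bm e_j^T\bbX\bw_0 = \bbx_j^T\bw_0$, $\bs^T\bm1 = n\eta$, one collects terms into the displayed form $\beta(1 + \bs_{j(i)}\tfrac{-n\eta + \bs^T\bbX\bw_0}{n+1} - \bbx_{j(i)}^T\bw_0) + \bw_0(i)$, which is \eqref{eq:other-co}.

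The main obstacle I anticipate is not any single computation but getting the SGD (as opposed to GD) convergence statement clean: one must argue that the stochastic iterates stay in $\bw_0 + \mathrm{row}(\bbX)$ (immediate, since each stochastic gradient is a multiple of a single sample $\bxi$) and then that SGD on the restricted strongly convex quadratic converges to its unique minimizer; this is standard but requires care about step-size schedules and, if one wants almost-sure convergence, a Robbins--Monro-type argument. The rest — invertibility of $\bbX\bbX^T$, the Sherman--Morrison inversion, and the coordinate bookkeeping — is routine once the two-nonzero-coordinates-per-sample structure from Assumption~\ref{asp:2} is in hand. A minor bookkeeping point to be careful about is consistently handling the reduction $\bxi\mapsto y_i\bxi$ so that the sign pattern $\bs_j$ correctly encodes both the $\eta$ noise and the original labels.
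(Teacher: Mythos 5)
Your proposal is correct and follows essentially the same route as the paper's proof: iterates remain in the affine subspace $V = \bw_0 + \mathrm{RowSpan}(\bbX)$, the limit is the interpolating point of that subspace, $\bbX\bbX^T = \bm{I} + \bs\bs^T$ is inverted via Sherman--Morrison, and the coordinates are read off; your reparametrization $\bw = \bw_0 + \bbX^T\bm{v}$ merely makes the paper's ``GD solves the least-squares problem on $V$'' step slightly more explicit. One small note: your derived expression for $\bw'(i)$, $i\neq 1$, lacks the factor $\eta$ multiplying $\bs_{j(i)}\bigl(\tfrac{-n\eta+\bs^T\bbX\bw_0}{n+1}\bigr)$ that appears in \eqref{eq:other-co}; your version is what the direct computation actually yields, so the extra $\eta$ in the displayed equation appears to be a typo in the paper rather than a gap in your argument.
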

\begin{proof}
We focus on the gradient descent case; the proof for SGD is analogous. Consider the empirical loss $\mathcal{L}(\bw)$. By the definition of gradient descent, the iterations always stay in the affine space $V = \bw_0 + \text{RowSpan}(X)$. 
Gradient descent solves the linear least squares problem $\min_{\bw \in V} \mathcal{L}(\bw)$. We claim that $w'$ is indeed the optimal solution to this program, and thus gradient descent converges to it. 

First, one can check $\bX\bX^T$ is non-singular under  Assumption~\ref{asp:2}, since $\bX\bX^T = \bm{I} + \bm{s}\bm{s}^T$ and $\bs^T\bs \neq -1$. Moreover, $\bw' \in V$ since $ \bbX^T (\bbX\bbX^T)^{-1}$  is the orthogonal projector onto the row space of $\bbX$.  Finally, we check that $\bw'$ achieves zero empirical loss 
\begin{equation}
     \bbX\bw' =  \bbX \bbX^T (\bbX\bbX^T)^{-1}(\bm{1} - \bm{X}\bw_0) + \bbX \bw_0 = \bm{1}.
\end{equation}
For the first coordinate of $\bw'$, by the Sherman-Morrison formula~\cite[p. 51]{golub1996matrix}, 
\begin{equation}
     (\bbX\bbX^T)^{-1} = (\bm{I} + \bm{s}\bm{s}^T)^{-1} = \bm{I} - \frac{\bs\bs^T}{1+\bs^T\bs} =\bm{I} - \frac{\bs\bs^T}{n+1}  .
\end{equation}
Substituting this into~\eqref{eq:gd-conv} and simplifying yields claims~\eqref{eq:first-co} and~\eqref{eq:other-co}.
\end{proof}

We now recall Theorem \ref{thm:sgd-good}:

\sgdgood*
    
\begin{proof}
Let $k(j)$ denote the non-zero coordinate of $\bbx_j$ (besides the first coordinate). First we note that
\[ \bs^T \bbX \bw_0 = n \bw_0(1) + \sum_{j=1}^n \bs_j \bbx_j(k(j)) \bw_0(k(j)) \]
Because each $\bbx_j(k(j))$ is a $\text{Bernoulli}(p)$ random variable and every $\bw_0(k(j)) \leq 1$, we can apply Chebyshev's inequality to further deduce that with high probability, $\bs^T \bbX \bw_0 = n \bw_0(1) + O(\sqrt{n})$. Substituting this into \eqref{eq:first-co} of Lemma~\ref{lem:gd}, letting $\bw'$ be the weight vector SGD converges to, we obtain
\[ \bw'(1) =  \left(\frac{n}{n+1}\right) \eta - \frac{n \bw_0(1) + O(\sqrt{n})}{n+1} + \bw_0(1) = \eta - o(1)
\]

By \eqref{eq:other-co} of Lemma~\ref{lem:gd}, for every coordinate $i$ such that $\bbx_j(i) = 0$ for all $j$, we have $\bw'(i) = \bw_0(i)$. Consider a  point $\bbx$ drawn from the population, and let $k$ be the index of the non-zero coordinate of $\bbx$. With high probability, $k \neq k(j)$ for all $j \in [n]$. With probability $1-p$, $\bbx(1) = 1$, and in this case we obtain

\[\langle \bw', \bbx \rangle = \bw'(1) \bbx(1) + \bw_0(k) \bbx(k) \geq \eta - o(1) - (\eta - \epsilon) = \epsilon - o(1)
\]

For sufficiently large $n$ (corresponding to sufficiently large $d$) this quantity is always positive, so with probability approaching $1-p$ the model correctly classifies $\bbx$.
\end{proof}

\section{Experimental Setup and Results for Sections~\ref{sec:linear} and \ref{sec:complex}}
\label{app:exp_details}

\paragraph{Dataset description.} We used the following four datasets in our experiments.
\begin{enumerate}[(i)]
    \item Binary MNIST: predict whether the image represents a number from 0 to 4 or from 5 to 9. It admits a linear classifier with accuracy $\approx 87\%$,
    \item CIFAR-10 Animals vs Objects: predict whether the image represents an animal or an object. In order not to enforce bias towards any of the classes we included all the 4 object classes (airplane, automobile, ship, truck) and only 4 out of 6 of the animal ones (bird, cat, dog, horse). Hence the number of positive and negative samples are the same. CIFAR-10 Animals vs Objects admits a linear classifier with accuracy $\approx 75\%$,
    \item CIFAR-10 First $5$ vs Last $5$: predict whether an image belongs to any of the first 5 classes of CIFAR10 (airplane, automobile, bird, cat, deer) or the last 5 classes (dog, frog, horse, ship, truck). CIFAR-10 First $5$ vs Last $5$ does not admit a linear classifier with satisfying accuracy. The best linear classifier achieves accuracy of $\approx 58\%$,
    \item High-dimensional Sinusoid: 
    predict $y \vcentcolon=\mathrm{sign}(\langle w ,x \rangle + \sin \langle w' ,x \rangle )$ for standard Gaussian $x \in \R^d$ where $d=100$. The vector $w$ is also chosen uniformly from $S^{d-1}$ and $w'$ is an orthogonal vector to the hyperplane. High-dimensional sinusoid admits a linear classifier with accuracy $\approx 80\%$.
\end{enumerate}

In the cases of datasets (i), (ii), and (iii) we created the train and tests sets by relabeling the train and test sets of MNIST and CIFAR10 with $\zo$ labels according to the specific dataset (and excluded the images that are not relevant).
All experiments are repeated 10 times with random initialization; standard deviations are reported in the figures (shaded area).

\paragraph{Model details.} Our results were consistent across various architectures and hyperparameter choices. For the Sinusoid distribution we train a 2-layer MLP, with ReLu activations. Each layer is 256 neurons. For the MNIST and CIFAR tasks in section \ref{sec:linear} we train a CNN with 4 2D-Conv layers; each layer has 32 filters of size $3\times3$. After the first and second layers we have a $2\times2$ Max-Pooling layer, and at the end of the 4 convolutional layers we have two Dense layers of 2000 units. The activations on all intermediate layers are ReLUs. Across all architectures the last layer is a sigmoid neuron.

\paragraph{Training procedure.} We initialize the neural networks with Uniform Xavier \cite{glorot2010understanding}. We note that in all the experiments we use \emph{vanilla SGD} without regularization (e.g. dropout) since we want to isolate and investigate purely the effect of the optimization algorithm. We use batch size of 32 for MLP's and 64 for CNNs. 
For the MLP's in section \ref{sec:linear}, the learning rate is 0.01. For the CNN's in Section  \ref{sec:linear}, the learning rate is 0.001. For Section 4, we train the resnet and all smaller CNN's using SGD with momentum $0.9$, batch size $128$ and learning rate $0.01$.

\textbf{Finding the Conditional Models.}
For Section 3, in order to find the linear classifier $\ell$ that best explains the initial phase of learning, we do the following.
For tasks (i), (ii), and (iv), where there exists a unique optimal linear classifier,
we set $\ell$ by training a linear classifier via SGD
with logistic loss
on the training set of the distribution.

For task (iii), we need to break the symmetry among many linear classifier that are roughly of equal performance.
Here, we set $\ell$ by training a linear classifier
via SGD to reproduce the outputs of $f_{T_0}$ on the training set.
That is, we label the train set using $f_{T_0}$
(outputting labels in $\zo$), and train a linear classifier on these labels.
Note that we could also have trained on the sigmoid output of $f_{T_0}$, not rounding its output to $\zo$; this does not affect results in our experiments.

In Section 4, we perform a similar procedure:
After training $f_\infty$,
we train simple models $g_i$
on the predictions of $f_\infty$ on the train set, via SGD.

\paragraph{Estimating the Mutual Information.}
Let $f, g$ be two classifiers, and $y$ be the true labels.
In order to estimate our mutual-information metrics,
we use the empirical distribution of $(F, G, Y)$ on the test set.
For example, to estimate $I(F; Y | G)$ we use the definition
\begin{equation}
\label{eqn:CMI-def}    
I(F; Y | G) = \sum_{(f, y, g) \in \zo^3}  p(f, y, g) \log\left(\frac{p(f, y | g)}{p(f | g)p(y | g)}\right )
\end{equation}
where $p(f, y, g)$ is the joint probability density function of $(F, Y, G)$ and $p(f|g)$, etc. are the conditional density functions.
To estimate this quantity, we first compute the empirical distribution of $(f, y, g) \in \zo^3$ over the test set.
Let $\hat{p}(f, y, g)$ be this empirical density function.
Then we estimate $I(F; Y | G)$ by evaluating Equation~\ref{eqn:CMI-def} using $\hat p$ in place of $p$.

\paragraph{Further Quantitative Results.}
In Tables~\ref{tab:sec3exp} and \ref{tab:sec4exp} we provide further quantitative results for our experiments in Sections~\ref{sec:linear} and \ref{sec:complex} respectively.

\begin{table}[t]
\begin{center}
\begin{tabular}{ |c|c|c| }
    \hline
    \rule{0pt}{2.3ex}
    & Linear Model & Null Model \\
    & $\mu(F_{T_0};L)/I(F_{T_0}; Y)$ & $\mu(F_{T_0};\wt{L})/I(F_{T_0}; Y)$ \\
    \hline
    MNIST & 0.79 & 0.52 \\
    CIFAR (ii) & 0.80 & 0.31 \\ 
    CIFAR (iii) & 0.74 & 0.02 \\ 
    Sinusoid & 0.74 & 0.27\\
    \hline
\end{tabular}
\caption{Performance Correlation of Linear Model vs. Null Model}
\label{tab:sec3exp}
\end{center}
\end{table}

\begin{table}[t]
\begin{center}
\begin{tabular}{ |c|c|c| }
    \hline
    \rule{0pt}{2.3ex}
    & Simple Model & Null Model \\
    & $\mu(F_{T_0};G_i)/I(F_{T_0};Y)$ & $\mu(F_{T_0};\wt G_i)/I(F_{T_0}; Y)$ \\
    \hline
    2-layer CNN & 0.68 & 0.20 \\ 
    4-layer CNN & 0.72 & 0.31 \\ 
    6-layer CNN & 0.72 & 0.40 \\
    \hline
\end{tabular}
\caption{Performance Correlation of simple CNN Models vs. Null Model on CIFAR First vs.\ Last 5}
\label{tab:sec4exp}
\end{center}
\end{table}

\newpage
\section{Additional Plots}
\label{app:plots}
\begin{figure}[h!]
    \centering
    \begin{subfigure}{0.2\textwidth}
            \centering
            \includegraphics[width=\textwidth]{./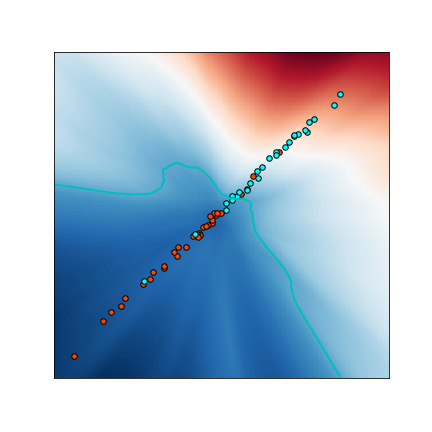}
            \caption{Step 0}
    \end{subfigure}%
    \begin{subfigure}{0.2\textwidth}
            \centering
            \includegraphics[width=\textwidth]{./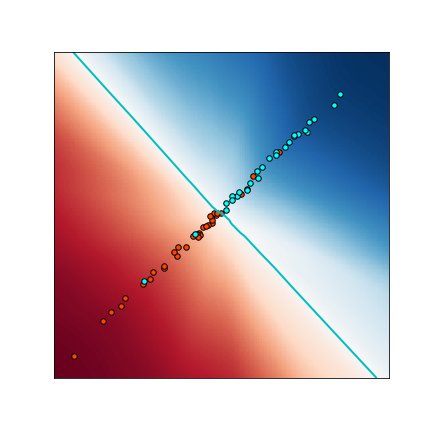}
            \caption{Step 10}
    \end{subfigure}%
    \begin{subfigure}{0.2\textwidth}
            \centering
            \includegraphics[width=\textwidth]{./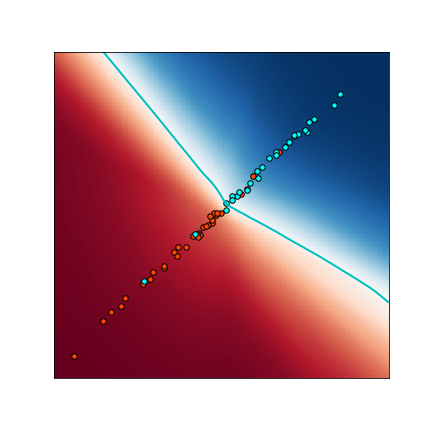}
            \caption{Step 100}
    \end{subfigure}%
    \begin{subfigure}{0.2\textwidth}
            \centering
            \includegraphics[width=\textwidth]{./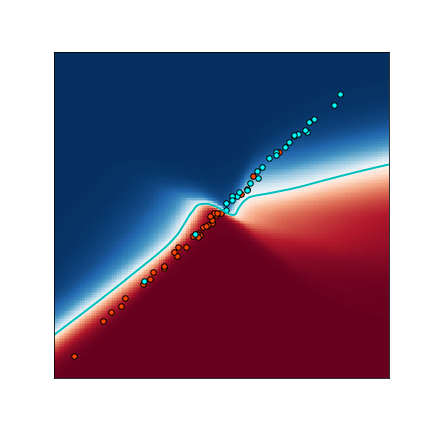}
            \caption{Step 1000}
    \end{subfigure}%
    \begin{subfigure}{0.2\textwidth}
            \centering
            \includegraphics[width=\textwidth]{./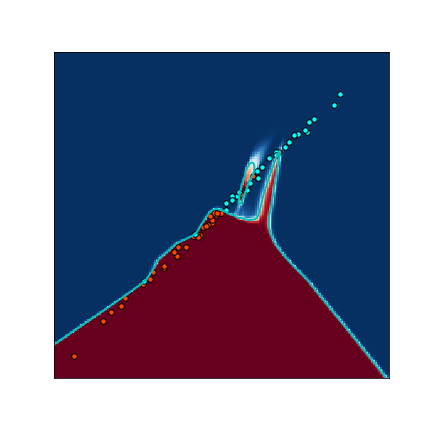}
            \caption{Step 10000}
    \end{subfigure}%
    \caption{SGD training on a 3-layer, width-100 dense neural network. The data distribution is an (essentially) $1$-dimensional Gaussian labeled by a linear classifier with 10\% label noise. After a few hundreds of SGD steps the decision boundary is very non-linear outside of the training set, indicating that the neural network does not actually learn a linear classifier but rather a classifier that highly agrees with a linear one. In other words, there exist a linear classifier that explains the predictions of the neural network well, despite the fact that the network itself is highly non-linear.}
    \label{fig:gaussian1dim}
\end{figure}

\begin{figure}[h!]
    \centering
    \begin{subfigure}{0.2\textwidth}
            \centering
            \includegraphics[width=\textwidth]{./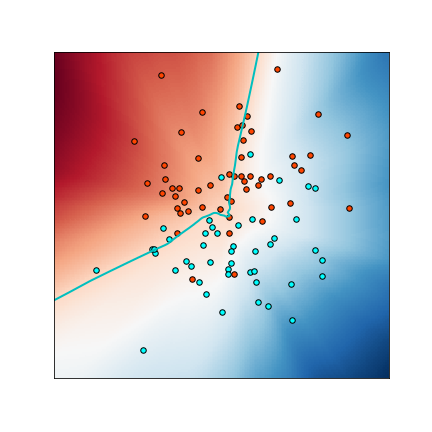}
            \caption{Step 0}
    \end{subfigure}%
    \begin{subfigure}{0.2\textwidth}
            \centering
            \includegraphics[width=\textwidth]{./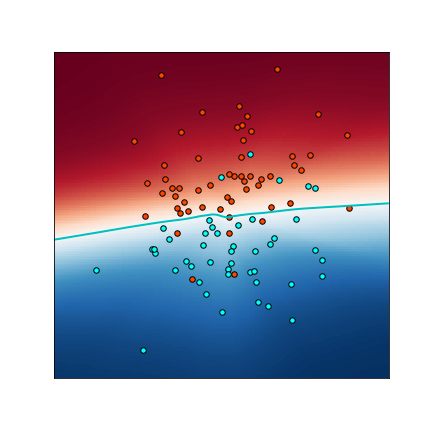}
            \caption{Step 10}
    \end{subfigure}%
    \begin{subfigure}{0.2\textwidth}
            \centering
            \includegraphics[width=\textwidth]{./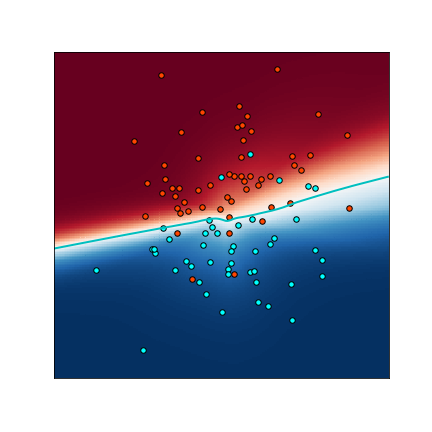}
            \caption{Step 100}
    \end{subfigure}%
    \begin{subfigure}{0.2\textwidth}
            \centering
            \includegraphics[width=\textwidth]{./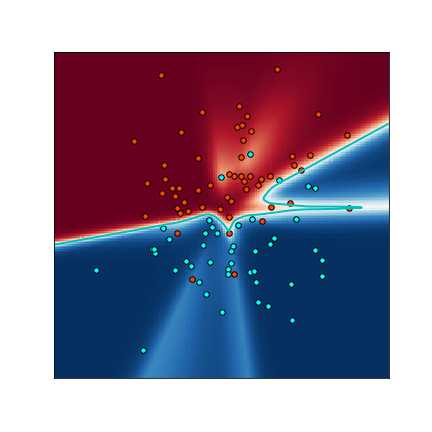}
            \caption{Step 1000}
    \end{subfigure}%
    \begin{subfigure}{0.2\textwidth}
            \centering
            \includegraphics[width=\textwidth]{./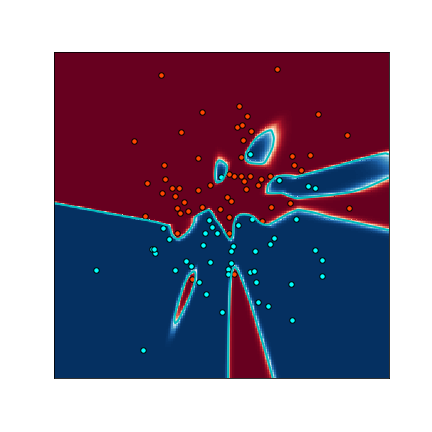}
            \caption{Step 10000}
    \end{subfigure}%
    \caption{A simplified version of Figure~\ref{fig:phasesoflearning}. SGD training on a 3-layer, width-100 dense neural network.
    The data distribution is an isotropic Gaussian in 2-dimensions, labeled by a linear classifier with 10\% label noise. The blue line corresponds to the decision boundary of the neural network which becomes more ``linear'' in the initial stages before starting to overfit to the label noise.}
    \label{fig:gaussian2dim2}

\end{figure}

\begin{figure}[h!]
\resizebox{0.7\textwidth}{!}{
    \centering
    \begin{subfigure}{0.3\textwidth}
            \centering
            \includegraphics[width=\textwidth]{./step100.png}
            \caption{Good Init (via 100 SGD steps)}
    \end{subfigure}%
    \begin{subfigure}{0.3\textwidth}
            \centering
            \includegraphics[width=\textwidth]{./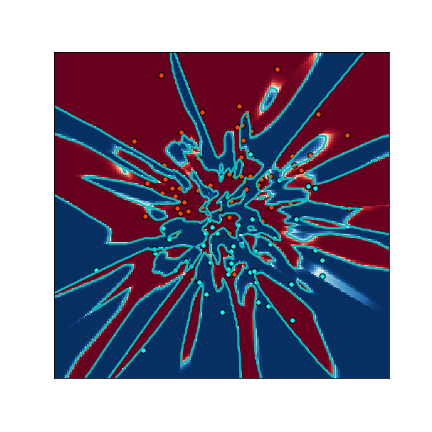}
            \caption{Bad Initialization}
    \end{subfigure}%
    \begin{subfigure}{0.3\textwidth}
            \centering
            \includegraphics[width=\textwidth]{./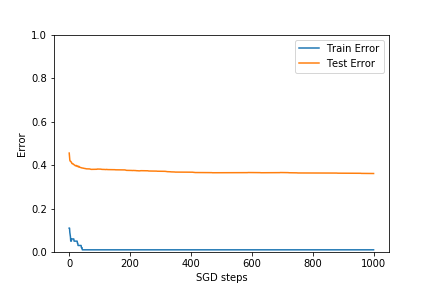}
            \caption{Continuing training from bad initialization.}
    \end{subfigure}%
    }
    \caption{Two different initializations, each with 88\% accuracy on the training set. The data distribution is an isotropic Gaussian in 2-dimensions, labeled by a linear classifier with 10\% label noise. The good initialization is found after running 100 SGD steps from a random initialization. The bad initialization is found by generating randomly labeled points, and fitting the function to them together with the original training set. One can see that continuing training from the bad initialization allows us to overfit to the training set but does not improve the population accuracy at all. The model here is again a 3-layer, width-100 dense neural network.}\label{fig:init}
\end{figure}

\end{document}